\documentclass[final, 12pt]{alt2023} 
\newenvironment{proofsk}{%
  \proof}{\endproof}

\title[Improved High-Probability Regret for Adversarial Bandits with Feedback Graphs]{Improved High-Probability Regret for Adversarial Bandits with Time-Varying Feedback Graphs}
\usepackage{times}
\newif\ifspacehack
\usepackage{natbib}
\hypersetup{
    colorlinks = blue,
    breaklinks,
    linkcolor = blue,
    citecolor = blue,
    urlcolor  = blue,
}
\usepackage{graphicx}
\usepackage{mathtools}
\usepackage{footnote}
\usepackage{float}
\usepackage{xspace}
\usepackage{multirow}
\usepackage{xcolor}
\usepackage{wrapfig}
\usepackage{framed}
\usepackage{bbm}
\usepackage{footnote}
\usepackage{nicefrac}
\usepackage{makecell}
\usepackage{algorithm}
\usepackage{enumitem}
\usepackage{bm}
\makesavenoteenv{tabular}
\makesavenoteenv{table}


\renewcommand{\tilde}{\widetilde}

\newcommand{\expthree}{\textsc{Exp3}\xspace}

\newcommand{\expthreeG}{\textsc{Exp3.G}\xspace}
\newcommand{\expthreeix}{\textsc{Exp3-IX}\xspace}
\newcommand{\elpp}{\textsc{ELP.P}\xspace}

\def \R {\mathbb{R}}

\newcommand{\eps}{\epsilon}

\newcommand{\calT}{{\mathcal{T}}}

\newcommand{\Reg}{\text{\rm Reg}}

\newcommand{\one}{\boldsymbol{1}}

\newcommand{\biasterm}{\textsc{Bias-Term}\xspace}
\newcommand{\stabterm}{\textsc{Stability-Term}\xspace}

\DeclareMathOperator*{\argmin}{argmin}

\newcommand{\E}{\field{E}}

\newcommand{\inner}[1]{ \left\langle {#1} \right\rangle }

\newcommand{\wh}{\widehat}
\newcommand{\wt}{\widetilde}

\newcommand{\ellhat}{\wh{\ell}}

\newcommand{\order}{\ensuremath{\mathcal{O}}}
\newcommand{\otil}{\ensuremath{\tilde{\mathcal{O}}}}


\usepackage{lipsum,booktabs}
\usepackage{amsmath,mathrsfs,amssymb,amsfonts,bm}
\usepackage{rotating}
\usepackage{pdflscape}
\usepackage{hyperref,url}
\hypersetup{
    colorlinks,
    breaklinks,
    linkcolor = blue,
    citecolor = blue,
    urlcolor  = blue,
}
\allowdisplaybreaks
\usepackage{appendix}
\usepackage{multirow,makecell}
\usepackage[vlined]{algorithm2e}

\renewcommand{\tilde}{\widetilde}

\newcommand \term[1]{\mathtt{Term}~(\mathtt{#1})}


\def \E {\mathbb{E}}

\def \R {\mathbb{R}}

\usepackage{mathtools}



\DeclareMathOperator*{\poly}{poly}





\usepackage{prettyref}
\newcommand{\pref}[1]{\prettyref{#1}}

\newcommand{\savehyperref}[2]{\texorpdfstring{\hyperref[#1]{#2}}{#2}}
\newrefformat{eq}{\savehyperref{#1}{Eq.~\textup{(\ref*{#1})}}}
\newrefformat{eqn}{\savehyperref{#1}{Eq.~(\ref*{#1})}}
\newrefformat{lem}{\savehyperref{#1}{Lemma~\ref*{#1}}}
\newrefformat{lemma}{\savehyperref{#1}{Lemma~\ref*{#1}}}
\newrefformat{def}{\savehyperref{#1}{Definition~\ref*{#1}}}
\newrefformat{line}{\savehyperref{#1}{Line~\ref*{#1}}}
\newrefformat{thm}{\savehyperref{#1}{Theorem~\ref*{#1}}}
\newrefformat{corr}{\savehyperref{#1}{Corollary~\ref*{#1}}}
\newrefformat{cor}{\savehyperref{#1}{Corollary~\ref*{#1}}}
\newrefformat{sec}{\savehyperref{#1}{Section~\ref*{#1}}}
\newrefformat{app}{\savehyperref{#1}{Appendix~\ref*{#1}}}
\newrefformat{appendix}{\savehyperref{#1}{Appendix~\ref*{#1}}}
\newrefformat{assum}{\savehyperref{#1}{Assumption~\ref*{#1}}}
\newrefformat{ex}{\savehyperref{#1}{Example~\ref*{#1}}}
\newrefformat{fig}{\savehyperref{#1}{Figure~\ref*{#1}}}
\newrefformat{table}{\savehyperref{#1}{Table~\ref*{#1}}}
\newrefformat{tab}{\savehyperref{#1}{Table~\ref*{#1}}}
\newrefformat{alg}{\savehyperref{#1}{Algorithm~\ref*{#1}}}
\newrefformat{rem}{\savehyperref{#1}{Remark~\ref*{#1}}}
\newrefformat{remark}{\savehyperref{#1}{Remark~\ref*{#1}}}
\newrefformat{conj}{\savehyperref{#1}{Conjecture~\ref*{#1}}}
\newrefformat{prop}{\savehyperref{#1}{Proposition~\ref*{#1}}}
\newrefformat{proto}{\savehyperref{#1}{Protocol~\ref*{#1}}}
\newrefformat{prob}{\savehyperref{#1}{Problem~\ref*{#1}}}
\newrefformat{claim}{\savehyperref{#1}{Claim~\ref*{#1}}}
\newrefformat{que}{\savehyperref{#1}{Question~\ref*{#1}}}
\newrefformat{op}{\savehyperref{#1}{Open Problem~\ref*{#1}}}
\newrefformat{fn}{\savehyperref{#1}{Footnote~\ref*{#1}}}

\def \epsilon {\varepsilon}

\newcommand{\Tcal}{\mathcal T}








\newcommand{\Nin}{N^{\mathrm{in}}}

\newcommand{\pa}[1]{\left(#1\right)}

\newcommand{\brgmd}{D}
\newcommand{\hatell}{\widehat{\ell}}

\renewcommand{\ln}{\log}




\altauthor{%
 \Name{Haipeng Luo\nametag{\thanks{Authors are listed in alphabetical order.}}} \Email{haipengl@usc.edu}\\
 \addr University of Southern California
 \AND
 \Name{Hanghang Tong\nametag{\footnotemark[1]}} \Email{htong@illinois.edu}\\
 \addr University of Illinois at Urbana-Champaign
 \AND
 \Name{Mengxiao Zhang\nametag{\footnotemark[1]}} \Email{mengxiao.zhang@usc.edu}\\
 \addr University of Southern California
 \AND
 \Name{Yuheng Zhang\nametag{\footnotemark[1]}} \Email{yuhengz2@illinois.edu}\\
 \addr University of Illinois at Urbana-Champaign
}

\usepackage{multirow}

\begin{document}

\maketitle

\begin{abstract}%
  We study high-probability regret bounds for adversarial $K$-armed bandits with time-varying feedback graphs over $T$ rounds. For general strongly observable graphs, we develop an algorithm that achieves the optimal regret $\otil((\sum_{t=1}^T\alpha_t)^{\nicefrac{1}{2}}+\max_{t\in[T]}\alpha_t)$ with high probability, where $\alpha_t$ is the independence number of the feedback graph at round $t$. 
  Compared to the best existing result~\citep{neu2015explore} which only considers graphs with self-loops for all nodes, our result not only holds more generally, but importantly also removes any $\text{poly}(K)$ dependence that can be prohibitively large for applications such as contextual bandits.
  Furthermore, we also develop the first algorithm that achieves the optimal high-probability regret bound for weakly observable graphs, which even improves the best expected regret bound of~\citep{alon2015online2} by removing the $\order(\sqrt{KT})$ term with a refined analysis.  
  Our algorithms are based on the online mirror descent framework, but importantly with an innovative combination of several techniques.
  Notably, while earlier works use optimistic biased loss estimators for achieving high-probability bounds, we find it important to use a pessimistic one for nodes without self-loop in a strongly observable graph.
\end{abstract}

\begin{keywords}%
  multi-armed bandits, bandits with feedback graph, high-probability regret bounds
\end{keywords}


\section{Introduction}\label{sec: intro}
In this work, we study adversarial multi-armed bandits (MAB) with directed feedback graphs, which is a generalization of the expert problem~\citep{freund1997decision} and the standard MAB problem~\citep{auer2002nonstochastic}. The interaction between the learner and the environment lasts for $T$ rounds. In each round, the learner needs to choose one of $K$ actions while simultaneously an adversary decides the loss for each action. After that, the learner suffers the loss of the chosen action, and her observation is determined based on a directed graph with the $K$ actions as nodes. Specifically, she observes the loss of every action to which the chosen action is connected. When the graph only contains self-loops, this recovers the standard MAB problem, and when the graph is a complete graph, this recovers the expert problem. By allowing arbitrary feedback graphs, however, this model captures many other interesting problems; see~\citep{mannor2011bandits} for example.

\citet{alon2015online2} characterized the minimax expected regret bound for this problem with a fixed feedback graph $G$.
Specifically, for a strongly observable graph (see~\pref{sec: problem-setup} for all formal definitions), their algorithm achieves $\otil(\sqrt{\alpha T})$ expected regret where $\alpha$ is the independence number of $G$, while for a weakly observable graph, they achieve $\otil(d^{\nicefrac{1}{3}}T^{\nicefrac{2}{3}})$ expected regret (ignoring a $\order(\sqrt{KT})$ term), where $d$ is the weak domination number of $G$. Both are shown to be near-optimal.

Despite these near-optimal expected regret guarantees, it is known that these algorithm exhibit a huge variance and can in fact suffer $\Theta(T)$ regret with a constant probability (see~\citep{lattimore2020bandit}), which is clearly undesirable in practice.
To mitigate this issue, \citet{alon2017nonstochastic} designed an algorithm called~\elpp, which ensures $\otil(\sqrt{mT}+m^2T^{\nicefrac{1}{4}})$ regret with high probability for self-aware graphs (a special case of strongly observable graphs in which every node has a self-loop), where $m$ is the size of the maximal acyclic graph in $G$ and can be much larger than $\alpha$. 
On the other hand, \citet{neu2015explore} developed the \expthreeix algorithm which uses implicit exploration in the loss estimator construction and achieves $\otil(\sqrt{\alpha T}+K)$ high-probability regret bound also for self-aware graphs. 
While the bound is almost optimal, the additional $K$ term could be prohibitively large for applications such as contextual bandits where $K$ is the number of policies (usually considered as exponentially large).

\begin{table}[t]
\centering
\caption{\small Summary of our results and comparisons with prior work. $T$ is the number of rounds. $K$ is the number of actions. $\alpha_t$ and $d_t$ are respectively the independence number and the weak domination number of feedback graph $G_t$ at round $t$. The results of~\citep{alon2015online2,neu2015explore} are for a fixed feedback graph $G$ (so $G_t = G$, $\alpha_t = \alpha$, and $d_t = d$ for all $t$). 
Our high-probability regret bound for weakly observable graphs omits some lower-order terms; see~\pref{thm: weak_high_prob} for the complete form.}
\label{tab: table1}
\renewcommand*{\arraystretch}{1.4}
\resizebox{\textwidth}{!}{
\begin{tabular}{|cc|c|cc|}
\hline
\multicolumn{2}{|c|}{\multirow{2}{*}{Graph Type}}                       & Expected Regret            & \multicolumn{2}{c|}{High-probability Regret}                \\ \cline{3-5} 
\multicolumn{2}{|c|}{}                                                  & \citep{alon2015online2}                     & \multicolumn{1}{c|}{\citep{neu2015explore}}     & \textbf{Our work }                     \\ \hline
\multicolumn{1}{|c|}{\multirow{2}{*}{Strongly Observable}} & General    & \multirow{2}{*}{$\otil(\sqrt{\alpha T})$} & \multicolumn{1}{c|}{N/A}   & \multirow{2}{*}{$\otil\left(\sqrt{\sum_{t=1}^T\alpha_t}+\max_{t\in[T]}\alpha_t\right)$} \\ \cline{2-2} \cline{4-4}
\multicolumn{1}{|c|}{}                                     & Self-aware &                         & \multicolumn{1}{c|}{$\otil(\sqrt{\alpha T}+K)$} &                         \\ \hline
\multicolumn{2}{|c|}{Weakly Observable}                                 & $\otil(d^{\nicefrac{1}{3}}T^{\nicefrac{2}{3}}+\sqrt{KT})$                 & \multicolumn{1}{c|}{N/A}   & $\otil((\sum_{t=1}^Td_t)^{\nicefrac{1}{3}}T^{\nicefrac{1}{3}}+\frac{1}{T}\sum_{t=1}^T d_t)$                  \\ \hline
\end{tabular}}
\end{table}


In this work, we significantly improve these results and extend them to more general graphs.
For full generality, we also consider a sequence of time-varying feedback graphs $G_1, \ldots, G_T$, each of which can be chosen adaptively by the environment based on the learner's previous actions. We denote the independence number of $G_t$ by $\alpha_t$ and its weakly domination number by $d_t$. Our main contributions are (see also \pref{tab: table1}):
\begin{itemize}[leftmargin=10pt]
    \item In \pref{sec: strongly}, we start with a refined analysis showing that~\expthreeix of~\citep{neu2015explore} in fact achieves $\otil((\sum_{t=1}^T\alpha_t)^{\nicefrac{1}{2}}+\max_{t\in[T]}\alpha_t)$ high-probability regret bound for self-aware graphs, removing the $\otil(K)$ dependence of~\citep{neu2015explore}. We then extend the same bound to the more general strongly observable graphs via a new algorithm that, on top of the implicit exploration technique of \expthreeix, further injects certain positive bias to the loss estimator of an action that has no self-loop but is selected with more than $1/2$ probability, making it a pessimistic estimator.
    
    \item In \pref{sec: weakly}, we propose an algorithm with high-probability regret $\otil((\sum_{t=1}^Td_t)^{\nicefrac{1}{3}}T^{\nicefrac{1}{3}}+\frac{1}{T}\sum_{t=1}^T d_t)$ for weakly observable graphs (ignoring some lower-order terms). To the best of our knowledge, this is the first algorithm with (near-optimal) high-probability regret guarantees for such graphs. Moreover, our bound even improves the expected regret bound of~\citep{alon2015online2} by removing the $\otil(\sqrt{KT})$ term. 
\end{itemize}

We remark that for simplicity we prove our results by assuming the knowledge of $\alpha_1, \ldots, \alpha_T$ or $d_1, \ldots, d_T$ to tune the parameters, but this can be easily relaxed using the standard doubling trick, making our algorithms completely parameter-free.

\paragraph{Techniques.} Our algorithms are based on the well-known Online Mirror Descent (OMD) framework with the entropy regularizer. However, several crucial techniques are needed to achieve our results, including implicit exploration, explicit uniform exploration, injected positive bias, and a loss shifting trick. Among them, using positive bias and thus a pessimistic loss estimator is especially notable since most earlier works use optimistic underestimators for achieving high-probability regret bounds.
The combination of these techniques also requires non-trivial analysis. 



\paragraph{Related Works.} Since~\citet{mannor2011bandits} initiated the study of online learning with feedback graphs, many follow-up works consider different variants of the problem, including stochastic feedback graphs~\citep{caron2012leveraging,buccapatnam2017reward,marinov2022stochastic}, minimax regret bounds for different feedback graph types~\citep{alon2015online2,chen2021understanding}, small-loss bounds~\citep{lykouris2018small,lee2020closer}, best-of-both-world algorithms~\citep{erez2021towards,ito2022nearly}, and uninformed time-varying feedback graphs~\citep{cohen2016online}. 


This work focuses on achieving high-probability regret bounds, which is relatively less studied in the bandit literature but as mentioned extremely important due to the potentially large variance of the regret.
As far as we know, to achieve high-probability regret bounds for adversarial bandit problems, there are three categories of methods as discussed below.

The first method is to inject a negative bias to the loss estimators, making them optimistic and trading unbiasedness for lower variance. 
Examples include the very first work in this line for standard MAB~\citep{auer2002nonstochastic}, 
linear bandits~\citep{bartlett2008high, abernethy2009beating, zimmert2022return}, and bandits with self-aware feedback graphs~\citep{alon2017nonstochastic}.

The second method is the so-called implicit exploration approach~\citep{kocak2014efficient} (which also leads to optimistic estimators). \citet{neu2015explore} used this method to achieve $\otil(\sqrt{KT})$ regret for MAB and $\otil(\sqrt{\alpha T}+K)$ regret for bandits with a fixed self-aware feedback graph, improving over the results by~\citep{alon2017nonstochastic}. 
\citet{lykouris2018small} also used implicit exploration and achieved high-probability first-order regret bound for bandits with self-aware undirected feedback graphs. However, their regret bounds are either suboptimal in $T$ or in terms of the clique partition number of the graph (which can be much larger than the independence number).

The third method is to use OMD with a self-concordant barrier and an increasing learning rate scheduling, proposed by~\citet{lee2020bias}.
They used this method to achieve high-probability data-dependent regret bounds for MAB, linear bandits, and episodic Markov decision processes. However, using a self-concordant barrier regularizer generally leads to $\otil(\sqrt{KT})$ regret in bandits with strongly observable feedback graphs and $\otil(K^{\nicefrac{1}{3}}T^{\nicefrac{2}{3}})$ regret in bandits with weakly observable feedback graphs, making it suboptimal compared to the minimax regret bound $\otil(\sqrt{\alpha T})$ and $\otil(d^{\nicefrac{1}{3}}T^{\nicefrac{2}{3}})$ respectively. 

All our algorithms adopt the implicit exploration technique for nodes with self-loop.
For strongly observable graphs, we find it necessary to further adopt the injected bias idea for nodes without self-loop, but contrary to prior works, our bias is positive, which makes the loss overestimated and intuitively prevents the algorithm from picking such nodes too often without seeing their actual loss.



\section{Problem Setup and Notations}\label{sec: problem-setup}
Throughout the paper, we denote $\{1,2,\cdots,N\}$ by $[N]$ for some positive integer $N$. At each round $t\in [T]$, the learner selects one of the $K$ available actions $i_t\in [K]$, while the adversary decides a loss vector $\ell_t\in[0,1]^K$ with $\ell_{t,i}$ being the loss for action $i$, and a directed feedback graph $G_t=([K], E_t)$ where $E_t\subseteq [K]\times[K]$. The adversary can be adaptive and chooses $\ell_t$ and $G_t$ based on the learner's previous actions $i_1,\dots,i_{t-1}$ in an arbitrary way. At the end of round $t$, the learner observes some information about $\ell_t$ according to the feedback graph $G_t$. Specifically, she observes the loss of action $j$ for all $j$ such that $i_t\in \Nin_t(j)$, where $\Nin_t(j)=\{i \in [K]: (i, j)\in E_t\}$ is the set of nodes that can observe node $j$. The standard measure of the
learner's performance is the regret, defined as the
difference between the total loss of the learner and that of the best fixed action in hindsight
\begin{align*}
    \Reg \triangleq \sum_{t=1}^T\ell_{t,i_t}-\sum_{t=1}^T\ell_{t,i^*},
\end{align*}
where $i^*=\argmin_{i\in[K]}\sum_{t=1}^T\ell_{t,i}$. In this work, we focus on designing algorithms with \emph{high-probability} regret guarantees. 

We refer the reader to~\citep{alon2015online2} for the many examples of such a general model, and only point out that the contextual bandit problem~\citep{langford2007epoch} is indeed a special case where each node corresponds to a policy and each $G_t$ is the union of several cliques. Each such clique consists of all polices that make the same decision for the current context at round $t$.
In this case, $K$, the number of policies, is usually considered as exponentially large, and only $\text{polylog}(K)$ dependence on the regret is acceptable.
This justifies the significance of our results that indeed remove $\text{poly}(K)$ dependence from existing regret bounds.

\paragraph{Strongly/Weakly Observable Graphs.} For a directed graph $G=([K], E)$, a node $i$ is observable if $\Nin(i)\neq\emptyset$. An observable node is strongly observable
if either $i\in \Nin(i)$ or $\Nin(i) = [K]\backslash\{i\}$, and weakly observable otherwise. Similarly, a graph is observable if all its nodes are observable. An observable graph is \emph{strongly observable} if all nodes
are strongly observable, and \emph{weakly observable} otherwise. \emph{Self-aware} graphs are a special type of strongly observable graphs where $i\in \Nin(i)$ for all $i\in[K]$.

\paragraph{Independent Set and Weakly Dominating Set.} An independence set of a directed graph is a subset of nodes in which no two distinct nodes are connected. The size of the largest independence set in $G_t$, called the \textit{independence number} of $G_t$, is denoted by $\alpha_t$. For a weakly observable graph $G$, a weakly dominating set is a subset $\mathcal{D}$ of nodes such that for any node $j$ in $G$ without self-loop, there exists $i \in \mathcal{D}$ such that $i$ is connected to $j$. 
The size of the smallest weakly dominating set of $G_t$, called the \textit{weak domination number} of $G_t$, is denoted by $d_t$.\footnote{We follow the definition in~\citep{ito2022nearly}, which differs by at most $1$ compared to that in~\citep{alon2015online2}.} 

\paragraph{Informed/Uninformed Setting.} 
Under the \emph{informed setting}, the feedback graph $G_t$ is shown to the learner at the beginning of round $t$ before she selects $i_t$. In other words, the learner's decision at round $t$ can be dependent on $G_t$. In contrast, the \emph{uninformed setting} is a harder setting, in which the learner observes $G_t$ only at the end of round $t$ after she selects $i_t$.
For strongly observable graphs, we study the harder uninformed setting,
while for weakly observable graphs, in light of the $\Omega(K^{\nicefrac{1}{3}}T^{\nicefrac{2}{3}})$ regret lower bound of~\citep[Theorem 9]{alon2015online}, we only study the informed setting.

\paragraph{Other Notations.} 
Define $S_t \triangleq \{i \in [K]: i \in \Nin_t(i)\}$ as the set of nodes with self-loop in $G_t$. 
For a differentiable convex function $\psi$ defined on a convex set $\Omega$, we denote the induced Bregman divergence by $D_\psi(x, y) = \psi(x)-\psi(y)-\inner{\nabla\psi(y), x-y}$ for any two points
$x,y\in\Omega$. For notational convenience, for two vectors $x,y\in\mathbb{R}^K$ and an arbitrary index set $U\subseteq [K]$, we define $\inner{x,y}_U\triangleq\sum_{i\in U}x_iy_i$ to be the partial inner product with respect to the coordinates in $U$. We denote the $(K-1)$-dimensional simplex by $\Delta_K$, the all-one vector in $\mathbb{R}^K$ by $\one$, and the $i$-th standard basis vector in $\mathbb{R}^K$ by $e_i$. We use the $\otil(\cdot)$ notation to hide factors that are logarithmic in $K$ and $T$.\footnote{In the text, $\order(\cdot)$ and $\otil(\cdot)$ often further hide lower-order terms (in terms of $T$ dependence) and $\poly(\log(1/\delta))$ factors for simplicity. However, in all formal theorem/lemma
statements, we use $\order(\cdot)$ to hide universal constants only and $\otil(\cdot)$ to also hide factors logarithmic in $K$ and $T$.}


\section{Optimal High-Probability Regret for Strongly Observable Graphs}\label{sec: strongly}
In this section, we consider the uninformed setting with strongly observable graphs, that is, each $G_t$ is strongly observable and revealed to the learner after she selects $i_t$ at round $t$. We propose an algorithm which achieves $\otil((\sum_{t=1}^T\alpha_t)^{\nicefrac{1}{2}}+\max_{t\in[T]}\alpha_t)$ high-probability regret bound. 
As mentioned, this result improves over those from~\citep{neu2015explore, alon2017nonstochastic} in two aspects: first, they only consider self-aware graphs;\footnote{Although~\citep{neu2015explore} only considers a fixed feedback graph (i.e. $G_t=G$ for all $t\in[T]$), their result can be directly generalized to time-varying feedback graphs. On the other hand, we point out that~\citep{alon2017nonstochastic} only considers the easier informed setting.}
second, our bound enjoys the optimal $(\sum_{t=1}^T\alpha_t)^{\nicefrac{1}{2}}$ dependence with no $\poly(K)$ dependence at all.

To present our algorithm, which is built on top of the \expthreeix algorithm of~\citep{neu2015explore},
we start by reviewing how \expthreeix works and how it achieves $\otil((\sum_{t=1}^T\alpha_t)^{\nicefrac{1}{2}}+K)$ high-probability regret bound for self-aware graphs.
At each round $t$, after picking the action $i_t$ randomly according to $p_t\in \Delta_K$ and observing the loss $\ell_{t,j}$ for all $j$ such that $i_t\in \Nin_t(j)$, \expthreeix constructs the underestimator $\ellhat_{t}$ for $\ell_t$, such that $\ellhat_{t,i}=\frac{\ell_{t,i}}{W_{t,i}+\gamma}\cdot\one\{i_t\in\Nin_t(i)\}$ where $W_{t,i}=\sum_{j\in\Nin_t(i)}p_{t,j}$ is the probability of observing $\ell_{t,i}$ and $\gamma>0$ is a bias parameter. Then, the strategy at round $t+1$ is computed via the standard multiplicative weight update (equivalent to OMD with entropy regularizer): $p_{t+1,i}\propto p_{t,i}\exp(-\eta\ellhat_{t,i})$ for all $i\in[K]$ where $\eta>0$ is the learning rate. 

Following standard analysis of OMD, we know that for any $j\in [K]$,
\begin{align}\label{eqn: omd-1}
    \sum_{t=1}^T\inner{p_{t}-e_j, \ellhat_{t}} &\leq \underbrace{\frac{\log K}{\eta}}_{\biasterm} + \underbrace{\eta\sum_{t=1}^T\sum_{i=1}^Kp_{t,i}\ellhat_{t,i}^2}_{\stabterm}.
\end{align}
To derive the high-probability regret bound from~\pref{eqn: omd-1}, \citet{neu2015explore} first shows that with probability at least $1-\delta$, the following two inequalities hold due to the underestimation:
\begin{align}
    &\sum_{t=1}^T\left(\ellhat_{t,j}-\ell_{t,j}\right)\leq \frac{\log(2K/\delta)}{2\gamma}, \quad\forall j\in[K] \label{eqn: ix-1} \\
    &\sum_{t=1}^T\sum_{i=1}^K\frac{p_{t,i}}{W_{t,i}+\gamma}\left(\ellhat_{t,i}-\ell_{t,i}\right)\leq \frac{K\log(2K/\delta)}{2\gamma}. \label{eqn: ix-2}
\end{align}
Define $Q_t\triangleq \sum_{i\in S_t}\frac{p_{t,i}}{W_{t,i}+\gamma}$, which is simply $\sum_{i=1}^K\frac{p_{t,i}}{W_{t,i}+\gamma}$ for self-aware graphs. Using~\pref{eqn: ix-2}, the stability term can be upper bounded as follows:
\begin{align}
    \stabterm \leq \eta\sum_{t=1}^T\sum_{i=1}^K\frac{p_{t,i}}{W_{t,i}+\gamma}\ellhat_{t,i}\leq \eta\sum_{t=1}^TQ_t + \otil\left(\frac{K\eta}{\gamma}\right). \label{eqn:stability}
\end{align}
To connect the true regret $\sum_{t=1}^T(\ell_{t,i_t}-\ell_{t,i^*})$ with $\sum_{t=1}^T\inner{p_{t}-e_{i^*}, \ellhat_t}$, direct calculation shows:
\begin{align}\label{eqn: decompose-ix}
    \sum_{t=1}^T(\ell_{t,i_t}-\ell_{t,i^*}) &= \sum_{t=1}^T\inner{p_{t}-e_{i^*},\ellhat_t} + \sum_{t=1}^T\left(\ell_{t,i_t}-\inner{p_t,\ell_t}\right) + \sum_{t=1}^T\left(\ellhat_{t,i^*}-\ell_{t,i^*}\right) \nonumber\\
    &\qquad + \sum_{t=1}^T\sum_{i=1}^K\left(W_{t,i}-\one\{i_t\in\Nin_t(i)\}\right)\frac{p_{t,i}\ell_{t,i}}{W_{t,i}+\gamma} + \sum_{t=1}^T\sum_{i=1}^K\frac{\gamma p_{t,i}\ell_{t,i}}{W_{t,i}+\gamma}.
\end{align}
In this last expression (summation of five terms), the first term is bounded using~\pref{eqn: omd-1} and~\pref{eqn:stability}; the second term is upper bounded by $\otil(\sqrt{T})$ via standard Azuma's inequality; the third term is bounded by $\otil(1/\gamma)$ according to~\pref{eqn: ix-1}; the fourth term is a summation over a martingale sequence and can be bounded by $\otil\left(\sqrt{\sum_{t=1}^TQ_t}+K\right)$ with high probability via Freedman's inequality; and the last term can be bounded by $\gamma\sum_{t=1}^TQ_t$. Combining all the bounds above, we obtain that with high probability, the regret is bounded as follows:
\begin{align*}
    \sum_{t=1}^T\left(\ell_{t,i_t}-\ell_{t,i^*}\right)\leq \otil\left(\frac{1}{\eta}+\frac{\eta K}{\gamma}+\frac{1}{\gamma}+\sqrt{\sum_{t=1}^TQ_t}+K+(\eta+\gamma) \sum_{t=1}^TQ_t\right).
\end{align*}
Finally, using the fact that $Q_t = \otil(\alpha_t)$ (Lemma 1 of~\citep{kocak2014efficient}, included as~\pref{lem: graph} in this work) and choosing $\gamma$ and $\eta$ optimally gives $\otil((\sum_{t=1}^T\alpha_t)^{\nicefrac{1}{2}}+K)$ high-probability bound. 

\paragraph{Improvement from $\otil(K)$ to $\otil(\max_{t\in[T]}\alpha_t)$.} We now show that with a refined analysis, the undesirable $\otil(K)$ dependence can be improved to $\otil(\max_{t\in[T]}\alpha_t)$ (still for self-aware graphs using the same \expthreeix algorithm). From the previous analysis sketch of~\citep{neu2015explore}, we can see that the $\otil(K)$ dependency comes from two terms: \stabterm and the fourth term in~\pref{eqn: decompose-ix}. The upper bound of \stabterm is derived by using~\pref{eqn: ix-2} and the fourth term in~\pref{eqn: decompose-ix} is bounded via Freedman's inequality. 
We show that both of these two bounds are in fact loose and can be improved by using a strengthened Freedman's inequality (Lemma 9 of~\citep{zimmert2022return}, included as~\pref{lem: Strengthened-Freedman} in the appendix). 
Specifically, we prove the following lemma to bound these two terms. Note that this lemma is not restricted to self-aware graphs, and we will use it later for both general strongly observable graphs and weakly observable graphs.

\begin{lemma}\label{lem: stab_freedman}
    For all $t$ and $i\in S_t$, let $\ellhat_{t,i}$ be the underestimator $\frac{\ell_{t,i}}{W_{t,i}+\gamma}\cdot\mathbbm{1}\{i_t\in\Nin_t(i)\}$ with $\gamma \leq \frac{1}{2}$. Then, with probability at least $1-\delta$, the following two inequalities hold:
    \begin{align}
    &\sum_{t=1}^T\sum_{i\in S_t}\frac{p_{t,i}}{W_{t,i}+\gamma}\left(\ellhat_{t,i}-\ell_{t,i}\right) \leq \order\left(\sum_{t=1}^T\frac{Q_t^2}{\gamma U_T} + U_T\log\left(\frac{KT}{\delta\gamma}\right)\right),\label{eqn: alpha-eqn-1}\\
    &\sum_{t=1}^T\sum_{i\in S_t}\left(W_{t,i}-\one\{i_t\in \Nin_t(i)\}\right)\frac{p_{t,i}\ell_{t,i}}{W_{t,i}+\gamma} \leq \order\left(\sqrt{\sum_{t=1}^TQ_t\iota_1} + \max_{t\in[T]}Q_t\iota_1\right),\label{eqn: alpha-eqn-2}
    \end{align}
    where $Q_t=\sum_{i\in S_t}\frac{p_{t,i}}{W_{t,i}+\gamma}$, $U_t = \max\left\{1, \frac{2\max_{t\in[T]}Q_t}{\gamma}\right\}$,  and $\iota_1=\log\left(\frac{2\max_tQ_t+2\sqrt{\sum_{t=1}^TQ_t}}{\delta}\right)$.
\end{lemma}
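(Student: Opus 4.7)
The plan is to cast each inequality as a sum of martingale differences and apply the strengthened Freedman's inequality (\pref{lem: Strengthened-Freedman}) with carefully derived range and conditional-variance bounds; for~\pref{eqn: alpha-eqn-1} a final AM--GM step produces the asymmetric form involving $U_T$.

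\textbf{Inequality~\pref{eqn: alpha-eqn-1}.} I would write $X_t \triangleq \sum_{i\in S_t}\frac{p_{t,i}}{W_{t,i}+\gamma}(\ellhat_{t,i}-\ell_{t,i}) = A_t - B_t$, where $A_t = \sum_{i\in S_t}\frac{p_{t,i}\ellhat_{t,i}}{W_{t,i}+\gamma}$ is random and $B_t = \sum_{i\in S_t}\frac{p_{t,i}\ell_{t,i}}{W_{t,i}+\gamma}$ is $\mathcal{F}_{t-1}$-measurable. A direct computation shows $\E_t[X_t] = -\sum_{i\in S_t}\frac{\gamma p_{t,i}\ell_{t,i}}{(W_{t,i}+\gamma)^2}\le 0$, so it suffices to bound $\sum_t(X_t-\E_t[X_t])$. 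For the \emph{range}: since $\ellhat_{t,i}\le 1/\gamma$ and $\ell_{t,i}\le 1$, both $A_t, B_t\in[0,Q_t/\gamma]$, giving $|X_t|\le Q_t/\gamma\le U_T/2$ by the definition of $U_T$. For the \emph{variance}, using $\E_t[X_t^2] = \mathrm{Var}_t(A_t) + (\E_t[A_t]-B_t)^2$, the second term is at most $Q_t^2$ via $|\E_t[A_t]-B_t|\le Q_t$, and the first is bounded by $\E_t[A_t^2]\le (\max A_t)\cdot\E_t[A_t]\le (Q_t/\gamma)\cdot Q_t$, yielding $\E_t[X_t^2]=O(Q_t^2/\gamma)$. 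Strengthened Freedman then produces, with probability at least $1-\delta/2$,
\[\sum_t X_t \;\le\; O\Bigl(\sqrt{(1/\gamma)\,\textstyle\sum_t Q_t^2\cdot\iota} \;+\; U_T\,\iota\Bigr),\]
where $\iota=\log(KT/(\delta\gamma))$ absorbs the peeling over the data-dependent quantity $U_T$. Finally, the AM--GM step $\sqrt{ab}\le a/(2U_T) + U_T b/2$ with $a=(1/\gamma)\sum_t Q_t^2$ and $b=\iota$ converts the square-root into $\sum_t Q_t^2/(\gamma U_T) + U_T\iota$, matching the claim.

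\textbf{Inequality~\pref{eqn: alpha-eqn-2}.} I would define $R_t \triangleq \sum_{i\in S_t:\,i_t\in\Nin_t(i)}\frac{p_{t,i}\ell_{t,i}}{W_{t,i}+\gamma}$ so that the left-hand side equals $\sum_t(\E_t[R_t]-R_t)$, a true martingale-difference sum since $\E_t[\one\{i_t\in\Nin_t(i)\}]=W_{t,i}$. Range: $|\E_t[R_t]-R_t|\le R_t+\E_t[R_t]\le Q_t+1$. For the conditional variance, set $R_t(k) \triangleq \sum_{i\in S_t:\,k\in\Nin_t(i)}\frac{p_{t,i}\ell_{t,i}}{W_{t,i}+\gamma}\le Q_t$; swapping the order of summation gives $\sum_k p_{t,k}R_t(k) = \sum_{i\in S_t}\frac{p_{t,i}W_{t,i}\ell_{t,i}}{W_{t,i}+\gamma}\le 1$, hence
\[\E_t[(\E_t[R_t]-R_t)^2]\;\le\;\E_t[R_t^2]\;=\;\sum_k p_{t,k}R_t(k)^2\;\le\;\max_k R_t(k)\cdot\sum_k p_{t,k}R_t(k)\;\le\;Q_t.\]
Applying strengthened Freedman with variance proxy $\sum_t Q_t$ and range proxy $O(\max_t Q_t)$ immediately yields the stated $O(\sqrt{\sum_t Q_t\,\iota_1}+\max_t Q_t\,\iota_1)$ bound, with $\iota_1$ arising from the peeling over the data-dependent quantities $\max_t Q_t$ and $\sum_t Q_t$ present in both the range and the cumulative variance.

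\textbf{Main obstacle.} The main technical difficulty is in~\pref{eqn: alpha-eqn-1}: a naive pair $|X_t|\le Q_t/\gamma$, $\E_t[X_t^2]\le (Q_t/\gamma)^2$ gives a Freedman term of order $\sqrt{\sum_t Q_t^2/\gamma^2\cdot\iota}$, which cannot be rebalanced against any quantity of interest. The crux is the sharper variance bound $\E_t[X_t^2]=O(Q_t^2/\gamma)$ (saving a factor of $\gamma$), obtained through the decomposition $\mathrm{Var}_t(A_t)+(\E_t[A_t]-B_t)^2$ and the pairing $\max A_t\le Q_t/\gamma$ with $\E_t[A_t]\le Q_t$; this is what enables the subsequent AM--GM rebalancing against $U_T$ and hence the asymmetric shape $\sum_t Q_t^2/(\gamma U_T)+U_T\iota$, which is precisely what the downstream analysis needs to absorb \stabterm. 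A secondary (more routine) obstacle is that $U_T$, $\max_t Q_t$ and $\sum_t Q_t$ are all data-dependent, so the Freedman invocation requires a standard peeling/union-bound device; this only inflates logarithmic factors and does not affect the dominant behavior.
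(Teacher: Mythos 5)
Your proposal is correct and takes essentially the same route as the paper's proof: both apply the strengthened Freedman inequality (\pref{lem: Strengthened-Freedman}) to the same two sums, with range $O(Q_t/\gamma)$ and conditional variance $O(Q_t^2/\gamma)$ for \pref{eqn: alpha-eqn-1} (your $\Var_t(A_t)+(\E_t[A_t]-B_t)^2$ decomposition versus the paper's direct bound by $\E_t[A_t^2]$ is only a cosmetic difference), range $O(Q_t)$ and variance $O(Q_t)$ for \pref{eqn: alpha-eqn-2}, followed by the identical AM--GM rebalancing against $U_T$. The only minor remark is that no separate peeling device is needed: the strengthened Freedman inequality already places the data-dependent quantities inside the logarithm, which the paper then bounds deterministically via $Q_t\le K$ and $X_t\le 2K/\gamma$ to obtain the $\log(KT/(\delta\gamma))$ factor you state.
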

The full proof is deferred to~\pref{app: proof-exp3-ix}. As $Q_t/\gamma\leq U_T=\Theta(\max_{t\in[T]}Q_t)/\gamma$ for all $t\in[T]$ and $Q_t\leq \otil(\alpha_t)$ , \pref{lem: stab_freedman} shows that \stabterm is bounded by $\otil(\eta\sum_{t=1}^TQ_t + \eta U_T) =\otil(\eta \sum_{t=1}^T\alpha_t +\eta\max_{t\in[T]}\alpha_t/\gamma)$, which only has logarithmic dependence on $K$, unlike the $\otil(\eta K/\gamma)$ bound of~\pref{eqn: ix-2}! For the fourth term in~\pref{eqn: decompose-ix},~\pref{lem: stab_freedman} shows that it is bounded by $\otil((\sum_{t=1}^T\alpha_t)^{\nicefrac{1}{2}}+\max_{t\in[T]}\alpha_t)$, which again has no $\text{poly}(K)$ dependence.  Combining~\pref{lem: stab_freedman} with the rest of the analysis outlined earlier, we know that \expthreeix in fact achieves $\otil((\sum_{t=1}^T\alpha_t)^{\nicefrac{1}{2}}+\max_{t\in[T]}\alpha_t)$ high-probability regret for self-aware graphs, formally stated in the following theorem. The full proof is deferred to~\pref{app: proof-exp3-ix}.
\begin{theorem}\label{thm: exp3-ix}
    \expthreeix with the optimal choice of $\eta>0$ and $\gamma>0$ guarantees that with probability at least $1-\delta$, 
$
        \Reg= \otil\left(\sqrt{\sum_{t=1}^T\alpha_t\log\frac{1}{\delta}}+\max_{t\in[T]}\alpha_t\log\frac{1}{\delta}\right).
$
\end{theorem}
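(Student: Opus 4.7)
The plan is to follow the standard OMD analysis outlined in the excerpt, but to replace the loose bounds of \citet{neu2015explore} on the stability term and the Freedman deviation term by the sharper bounds provided by \pref{lem: stab_freedman}. Concretely, I would start from the OMD inequality~\eqref{eqn: omd-1} applied at $j=i^\star$, and from the regret decomposition~\eqref{eqn: decompose-ix}, which expresses $\sum_t(\ell_{t,i_t}-\ell_{t,i^\star})$ as the sum of five terms. Terms one, two, three, and five are handled exactly as in the sketch preceding the theorem: the second is $\otil(\sqrt{T})$ by Azuma (since the increments are bounded in $[-1,1]$), the third is $\otil(\log(K/\delta)/\gamma)$ by~\eqref{eqn: ix-1}, and the fifth is bounded by $\gamma\sum_t Q_t$ because $\sum_i p_{t,i}\ell_{t,i}/(W_{t,i}+\gamma)\le \sum_i p_{t,i}/(W_{t,i}+\gamma)=Q_t$ in the self-aware case (where $S_t=[K]$).

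The key step is the refined treatment of the remaining two $K$-dependent pieces. First, for the stability term in~\eqref{eqn: omd-1}, I bound $p_{t,i}\hat\ell_{t,i}^2 \le \frac{p_{t,i}}{W_{t,i}+\gamma}\hat\ell_{t,i}$ (since $\hat\ell_{t,i}\le 1/(W_{t,i}+\gamma)$ after observation), so that after writing $\hat\ell_{t,i}=\ell_{t,i}+(\hat\ell_{t,i}-\ell_{t,i})$ I split the stability term into $\eta\sum_t Q_t$ (using $\ell_{t,i}\le 1$ and the definition of $Q_t$) plus the martingale-like quantity controlled by~\eqref{eqn: alpha-eqn-1}; this gives $\stabterm = \otil\bigl(\eta\sum_t Q_t + \eta\max_t Q_t/\gamma\bigr)$ with no polynomial dependence on $K$. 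Second, the fourth term in~\eqref{eqn: decompose-ix} is directly bounded by~\eqref{eqn: alpha-eqn-2} as $\otil(\sqrt{\sum_t Q_t}+\max_t Q_t)$. Putting everything together, with high probability
\[
\Reg \le \otil\Bigl(\tfrac{1}{\eta}+\tfrac{1}{\gamma}+\eta\sum_t Q_t+\tfrac{\eta \max_t Q_t}{\gamma}+\sqrt{\sum_t Q_t}+\max_t Q_t+(\eta+\gamma)\sum_t Q_t\Bigr).
\]

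Finally, I invoke the graph lemma ($Q_t=\otil(\alpha_t)$) to replace every $Q_t$ by $\alpha_t$ up to logarithmic factors, and tune $\eta,\gamma$ to balance the dominant terms. The natural choices are $\eta\asymp\gamma \asymp \sqrt{\log(K)\log(1/\delta)/\sum_t\alpha_t}$, truncated so that $\gamma\le 1/2$ (a hypothesis of \pref{lem: stab_freedman}); note that the cross term $\eta\max_t Q_t/\gamma$ becomes $\otil(\max_t\alpha_t)$ with this choice of $\eta/\gamma=\Theta(1)$, so no extra $\poly(K)$ factor appears. Summing the pieces yields the advertised $\otil(\sqrt{\sum_t\alpha_t\log(1/\delta)}+\max_t\alpha_t\log(1/\delta))$ bound. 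The main obstacle I anticipate is purely bookkeeping: making sure that every appearance of $K$ in the intermediate bounds (especially from union bounds over concentration inequalities and from~\eqref{eqn: ix-1}) is confined inside logarithms, and that the parameter tuning simultaneously controls the $\eta\max_t Q_t/\gamma$ term and the $(\eta+\gamma)\sum_t Q_t$ term without reintroducing a $\poly(K)$ factor; this is exactly what the strengthened Freedman bound in \pref{lem: stab_freedman} buys us.
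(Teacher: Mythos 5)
Your proposal is correct and follows essentially the same route as the paper's own proof: the same five-term decomposition from \pref{eqn: decompose-ix}, the same bounding of the stability term via $p_{t,i}\ellhat_{t,i}^2\le \frac{p_{t,i}}{W_{t,i}+\gamma}\ellhat_{t,i}$ combined with \pref{eqn: alpha-eqn-1}, the same use of \pref{eqn: alpha-eqn-2} for the fourth term, and the same $\eta=\gamma\asymp\sqrt{\log(1/\delta)/\sum_t\alpha_t}$ tuning after invoking \pref{lem: graph}. The only cosmetic differences (applying the bound at $j=i^\star$ rather than a generic $j$, and truncating $\gamma\le 1/2$) do not change the argument.
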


\begin{algorithm}[t]
\caption{Algorithm for Strongly Observable Graphs}\label{alg: strong-obs}
\textbf{Input:} Parameter $\gamma$, $\beta$, $\eta$, $\calT=\emptyset$.

\textbf{Define:} Regularizer $\psi(p)=\frac{1}{\eta}\sum_{i=1}^K p_i \ln p_i$.

\textbf{Initialize:} $p_1$ is such that $p_{1,i} = \frac{1}{K}$ for all $i\in [K]$.

\For{$t=1,2,\dots, T$}{

\nl Calculate $\wt{p}_{t} = (1-\eta)p_{t}+\frac{\eta}{K}\one$.\label{line: uniform}

\nl Sample action $i_t$ from $\wt{p}_t$.

\nl Receive the feedback graph $G_t$ and the feedback $\ell_{t,j}$ for all $j$ such that $i_t\in \Nin_t(j)$.

\nl
Construct estimator $\ellhat_t \in\R^K$ such that 
$\ellhat_{t,i}=\frac{\ell_{t,i}\mathbbm{1}\{i_t\in \Nin_t(i)\}}{W_{t,i}+\gamma\mathbbm{1}\{i\in S_t\}}$
where $W_{t,i}=\sum_{j\in \Nin_t (i)}\wt{p}_{t,j}$. \label{line:strong-estimator}

\nl If there exists a node $j_t\in \bar{S_t}$ with $\wt{p}_{t,j_t}> \frac{1}{2}$ (at most one such $j_t$ exists), set $\calT \leftarrow \{t\}\cup\calT$.

\nl
Construct bias $b_t\in\R^K$ such that 
$b_{t,i} = \frac{\beta}{W_{t,i}}\mathbbm{1}\{t \in \calT, i = j_t\}$. \label{line:strong-bias}

\nl
Compute $p_{t+1}=\argmin_{p\in\Delta_K}\big\{\langle p,\ellhat_{t}+b_{t}\rangle+{\brgmd}_{\psi}(p,p_{t})\big\}$.
}\label{line:strong-omd}
\end{algorithm}

\paragraph{Generalization to Strongly Observable Graphs.} Next, we show how to deal with general strongly observable graphs. The pseudocode of our proposed algorithm is shown in~\pref{alg: strong-obs}. Compared to \expthreeix, there are three main differences. First, we have an additional $\eta$ amount of uniform exploration over all actions~(\pref{line: uniform}). Second, while keeping the same loss estimator construction for node $i\in S_t$ at each round $t$, for $i\in\bar{S}_t$ (nodes without self-loop), we construct a standard unbiased estimator~(\pref{line:strong-estimator}). Third, if there exists $j_t\in\bar{S}_t$ such that the probability of choosing action $j_t$ is larger than $\frac{1}{2}$, then we further add positive bias $\beta/W_{t,j_t}$ to the loss estimator $\ellhat_{t,j_t}$ (encoded via the $b_t$ vector; see~\pref{line:strong-bias} and~\pref{line:strong-omd}), making it a pessimistic over-estimator.
Intuitively, the reason of doing so is that we should avoid picking actions without self-loop too often even if past data suggest that it is a good action, since the only way to observe its loss and see if it stays good is by selecting other actions.
A carefully chosen positive bias injected to the loss estimator of such actions would exactly allow us to achieve this goal.
In what follows, by outlining the analysis of our algorithm, we further explain why we make each of these three modifications from a technical perspective.

First, we note that with a nonempty $\bar{S}_t$, the key fact used earlier $\sum_{i=1}^K\frac{p_{t,i}}{W_{t,i}+\gamma} = \otil(\alpha_t)$ is no longer true, making the fourth and the fifth term in~\pref{eqn: decompose-ix} larger than desired if we still do implicit exploration for all nodes.
Therefore, for nodes in $\bar{S}_t$, we go back to the original inverse importance weighted \emph{unbiased} estimators~(\pref{line:strong-estimator}), and decompose the regret against any fixed action $j\in[K]$ differently into the following six terms:
\begin{align}\label{eqn: strong-regret-full-decompose}
    \sum_{t=1}^T\left(\ell_{t,i_t}-\ell_{t,j}\right)  &\leq \underbrace{\left(\sum_{t=1}^T\ell_{t,i_t}-\sum_{t=1}^T\inner{\wt{p}_t, \ell_t}\right)}_{\term{1}} + \underbrace{\left(\sum_{t=1}^T\inner{\wt{p}_t-p_t,\ell_t}\right)}_{\term{2}}\nonumber\\
    &\qquad + \underbrace{\left(\sum_{t=1}^T\inner{p_t-e_{j}, \ell_t}_{\bar{S}_t} - \sum_{t=1}^T\mathbb{E}_t\left[\inner{p_t-e_{j}, \ellhat_t}_{\bar{S}_t}\right] \right)}_{\term{3}} \nonumber\\
    &\qquad + \underbrace{\left(\sum_{t=1}^T\mathbb{E}_t\left[\inner{p_t-e_{j}, \ellhat_t}_{\bar{S}_t}\right] - \sum_{t=1}^T\inner{p_t-e_{j}, \ellhat_t}_{\bar{S}_t}\right)}_{\term{4}} \nonumber\\
    &\qquad + \underbrace{\left(\sum_{t=1}^T\inner{p_t-e_{j},\ell_t-\ellhat_t}_{S_t}\right)}_{\term{5}} + \underbrace{\left(\sum_{t=1}^T\inner{p_t-e_{j}, \ellhat_t}\right)}_{\term{6}}.
\end{align}
$\term{1}$ can be bounded again by $\otil(\sqrt{T})$ via standard Azuma's inequality. $\term{2}$ can be bounded by $\eta T$ due to the $\order(\eta)$ amount of uniform exploration. $\term{3}$ is simply $0$ as $\ellhat_{t,i}$ is unbiased for $i\in\bar{S}_t$. $\term{5}$ can similarly be written as $\sum_{t=1}^T(\ellhat_{t,j}-\ell_{t,j})\mathbbm{1}\{j\in S_t\} + \sum_{t=1}^T\sum_{i\in S_t}\left(W_{t,i}-\one\{i_t\in\Nin_t(i)\}\right)\frac{p_{t,i}\ell_{t,i}}{W_{t,i}+\gamma} + \sum_{t=1}^T\sum_{i\in S_t}\frac{\gamma p_{t,i}\ell_{t,i}}{W_{t,i}+\gamma}$ since the loss estimator construction for $i\in S_t$ is the same as the one in~\expthreeix.  Following how we handled the last three terms in~\pref{eqn: decompose-ix}, we have with high probability,
\begin{align}\label{eqn: term-4-bound-main}
    \term{5}\leq \otil\left(\sqrt{\sum_{t=1}^T\alpha_t}+\max_{t\in[T]}\alpha_t + \gamma \sum_{t=1}^T\alpha_t+\frac{1}{\gamma}\right).
\end{align}
The formal statement and the proof are deferred to~\pref{lem: term-5} in~\pref{app: proof-strong-alg}.

The key challenge lies in controlling $\term{4}$ and~$\term{6}$.
To this end, let us first consider the variance of the $\langle p_t-e_j, \ellhat_t \rangle_{\bar{S}_t}$. Let $\calT=\{t: \exists j\in \bar{S}_t, \wt{p}_{t,j_t}>\frac{1}{2}\}$ be the final value of $\calT$ in \pref{alg: strong-obs}. 
If $t\notin \calT$, then $\ellhat_{t,i}\leq \frac{1}{1-\wt{p}_{t,i}}\leq 2$ for all $i\in\bar{S}_t$ and the variance of $\langle p_t-e_j, \ellhat_t \rangle_{\bar{S}_t}$ is a constant; otherwise, we know that there is at most one node $j_t\in\bar{S}_t$ such that $p_{t,j_t}>\frac{1}{2}$. Direct calculation shows that the variance of $\langle p_t-e_j, \ellhat_t \rangle_{\bar{S}_t}$ is bounded by $\otil(\frac{1}{W_{t,j_t}}\cdot\mathbbm{1}\{j\ne j_t\})$. With the help of the Freedman's inequality and the fact $W_{t,j_t}=\Omega(\eta)$ thanks to the uniform exploration (\pref{line: uniform}), we prove in \pref{lem: term-4} of \pref{app: proof-strong-alg} that $\term{4}$ can be bounded as follows: 
\begin{align}\label{eqn: term-2-bound}
    \term{4}\leq \otil\left(\sqrt{\sum_{t=1}^T\left(1+\frac{\mathbbm{1}\{j\neq j_t, t\in\calT\}}{W_{t,j_t}}\right)}+\frac{1}{\eta}\right).
\end{align}
Handling this potentially large deviation is exactly the reason we inject a positive bias to the loss estimator~(\pref{line:strong-bias}). Specifically, when $t\in \calT$, we add $b_{t,j_t}=\frac{\beta}{W_{t,j_t}}$ to the loss estimator $\ellhat_{t,j_t}$ for some parameter $\beta>0$. 
With the help of this positive bias, we can decompose $\term{6}$ as follows:
\begin{align}\label{eqn: bias-decompose}
    \term{6} = \sum_{t=1}^T\inner{p_t-e_{j}, \ellhat_t+b_t} - \sum_{t=1}^T\inner{p_t-e_{j},b_t}.
\end{align}
Direct calculation shows that the second negative term is of order $-\Theta(\sum_{t\in\calT}\frac{\beta}{W_{t,j_t}}\cdot\mathbbm{1}\{j\neq j_t\}))+\Theta(\sum_{t\in\calT}\beta\cdot\mathbbm{1}\{j=j_t\})$, large enough to cancel the large deviation in~\pref{eqn: term-2-bound}. Specifically, using AM-GM inequality, we obtain
\begin{align}\label{eqn: term-3-bound}
    \term{4}-\sum_{t=1}^T\inner{p_t-e_{i^*}, b_t}\leq \otil\left(\frac{1}{\eta} + \sqrt{T}+ \frac{1}{\beta}+\beta T\right).
\end{align}


The final step is to handle the first term in~\pref{eqn: bias-decompose}. Similar to~\pref{eqn: omd-1}, standard analysis of online mirror descent shows that
\begin{align}\label{eqn: exp3.G}
    \sum_{t=1}^T\inner{p_t-e_{j}, \ellhat_{t}+b_t}\leq \frac{\log K}{\eta} + \eta\sum_{t=1}^T\sum_{i=1}^Kp_{t,i}\left(\ellhat_{t,i}+b_{t,i}\right)^2,
\end{align}

However, unlike the case for self-aware graphs, when there exist nodes without a self-loop, the second term may be prohibitively large when $t\in\calT$. Inspired by~\citep{alon2015online2}, we address this issue with a \emph{loss shifting trick}. Specifically, the following refined version of~\pref{eqn: exp3.G} holds:
\begin{align}\label{eqn: exp3.G-shifted}
    \sum_{t=1}^T\inner{p_t-e_{j}, \ellhat_{t}+b_t}\leq \frac{\log K}{\eta} + 2\eta\sum_{t=1}^T\sum_{i=1}^Kp_{t,i}\left(\ellhat_{t,i}+b_{t,i}-z_t\right)^2,
\end{align}
for any $z_t\leq \frac{3}{\eta}$, $t\in[T]$. We choose $z_t = 0$ when $t\notin \calT$ and $z_t=\ellhat_{t,j_t}+b_{t,j_t}$ when $t\in\calT$, which satisfies the condition $z_t\leq \frac{3}{\eta}$ again thanks to the $\order(\eta)$ amount of uniform exploration over all nodes (\pref{line: uniform}). 
With such a loss shift $z_t$, continuing with~\pref{eqn: exp3.G-shifted} it can be shown that:
\begin{align}\label{eqn: exp3.G-shifted-2}
    \sum_{t=1}^T\inner{p_t-e_{j}, \ellhat_{t}+b_t}&\leq \otil\left(\frac{1}{\eta}+\eta\sum_{t\notin\calT}\sum_{i\in [K]}p_{t,i}\ellhat_{t,i}^2+\eta\sum_{t\in\calT}\frac{\mathbbm{1}\{i_t\neq j_t\}}{W_{t,j_t}}+\beta^2T\right).
\end{align}
Note that for $t\in\calT$, $\ellhat_{t,i}\leq 2$ for all $i\in\bar{S}_t$. Therefore, the second term in~\pref{eqn: exp3.G-shifted-2} can be bounded by $\otil(\eta T+\eta\sum_{t=1}^T\sum_{i\in S_t}\frac{p_{t,i}}{W_{t,i}+\gamma}\ellhat_{t,i})\leq\otil(\eta T+\eta\sum_{t=1}^T\alpha_t+\eta/\gamma\cdot\max_{t\in[T]}\alpha_t)$ where the inequality is by using~\pref{lem: stab_freedman}. The third term can be bounded by $\otil(\sqrt{\eta T})$ with high probability by using Freedman's inequality. Together with~\pref{eqn: term-3-bound},~\pref{eqn: term-4-bound-main}, the bounds for $\term{1}$, $\term{2}$, and $\term{3}$, and the optimal choice of the parameters $\eta,\beta$, and $\gamma$, we arrive at the following main theorem for general strongly observable graphs (see \pref{app: proof-strong-alg} for the full proof).

\begin{theorem}\label{thm: main-strong-obs}
    \pref{alg: strong-obs} with parameter $\gamma = \beta = \eta = \min\left\{\frac{1}{\sqrt{\sum_{t=1}^T\alpha_t\log(1/\delta)}}, \frac{1}{2}\right\}$ guarantees that with probability at least $1-6\delta$, the regret is bounded as follows:
    \begin{align*}
        \Reg \leq \otil\left(\sqrt{\sum_{t=1}^T\alpha_t \log\frac{1}{\delta}}+\max_{t\in [T]}\alpha_t\log\frac{1}{\delta}\right).
    \end{align*}
\end{theorem}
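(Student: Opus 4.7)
The plan is to use the six-term regret decomposition in \pref{eqn: strong-regret-full-decompose} and bound each piece separately, then tune $\eta$, $\beta$, and $\gamma$ at the end. Three of the six terms are immediate: \term{1} is controlled by Azuma's inequality applied to the bounded martingale difference $\ell_{t,i_t}-\inner{\wt{p}_t,\ell_t}$, yielding $\otil(\sqrt{T\log(1/\delta)})$; \term{2} is bounded by $\eta T$ since $\nbr{\wt{p}_t-p_t}_1\le 2\eta$ from the uniform exploration on \pref{line: uniform}; and \term{3} vanishes exactly because on $\bar{S}_t$ the estimator $\ellhat_t$ is the standard inverse-propensity estimator and hence unbiased conditionally.

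Next, \term{5} depends only on coordinates in $S_t$, where the estimator coincides with the \expthreeix estimator. I would split it into the three pieces that already appear in the sharpened \expthreeix analysis: the per-arm deviation $\sum_t(\ellhat_{t,j}-\ell_{t,j})\mathbbm{1}\{j\in S_t\}$ controlled via \pref{eqn: ix-1}; the martingale remainder $\sum_t\sum_{i\in S_t}(W_{t,i}-\mathbbm{1}\{i_t\in\Nin_t(i)\})p_{t,i}\ell_{t,i}/(W_{t,i}+\gamma)$, handled by the second inequality of \pref{lem: stab_freedman}; and the nonnegative implicit-exploration contribution $\gamma\sum_{t,i\in S_t}p_{t,i}\ell_{t,i}/(W_{t,i}+\gamma)\le \gamma\sum_t Q_t$. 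Using the graph lemma bound $Q_t=\otil(\alpha_t)$ then gives exactly \pref{eqn: term-4-bound-main}.

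The heart of the proof, and what I expect to be the main obstacle, is the joint treatment of \term{4} and \term{6} on rounds $t\in\calT$ where some $j_t\in\bar{S}_t$ has $\wt{p}_{t,j_t}>1/2$. There the conditional variance of $\inner{p_t-e_j,\ellhat_t}_{\bar{S}_t}$ blows up to $\otil(1/W_{t,j_t})$ for $j\ne j_t$, and Freedman's inequality only yields the potentially large bound \pref{eqn: term-2-bound}. The positive bias $b_t$ of \pref{line:strong-bias} is injected precisely to neutralize this: the decomposition \pref{eqn: bias-decompose} produces a negative contribution of order $-\sum_{t\in\calT}(\beta/W_{t,j_t})\mathbbm{1}\{j\ne j_t\}$, which after an AM-GM step cancels the offending variance and leaves \pref{eqn: term-3-bound}. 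For the remaining OMD piece $\sum_t\inner{p_t-e_j,\ellhat_t+b_t}$, the naive stability bound is too large because $\ellhat_{t,j_t}^2\approx 1/W_{t,j_t}^2$; I would instead apply the loss-shifting identity \pref{eqn: exp3.G-shifted} with $z_t=\ellhat_{t,j_t}+b_{t,j_t}$ on $t\in\calT$ and $z_t=0$ otherwise. The uniform exploration forces $W_{t,j_t}\ge \eta/K$, so $z_t\le 3/\eta$ and the shift is admissible; the resulting expression \pref{eqn: exp3.G-shifted-2} reduces, after one more application of \pref{lem: stab_freedman} and a short Freedman bound on $\sum_{t\in\calT}\mathbbm{1}\{i_t\ne j_t\}/W_{t,j_t}$, to $\otil(1/\eta + \eta\sum_t\alpha_t + \eta\max_t\alpha_t/\gamma + \beta^2 T + \sqrt{\eta T})$.

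Summing the six bounds yields a total of order $1/\eta + \eta\sum_t\alpha_t + \max_t\alpha_t/\gamma + 1/\gamma + 1/\beta + \beta T + \sqrt{T}$, up to logarithmic factors. Setting $\eta=\beta=\gamma=\min\{1/\sqrt{\sum_t\alpha_t\log(1/\delta)},\,1/2\}$ balances the dominant pair $1/\eta$ against $\eta\sum_t\alpha_t$ to produce the leading $\sqrt{\sum_t\alpha_t\log(1/\delta)}$ term, absorbs $\max_t\alpha_t/\gamma$ into the additive $\max_t\alpha_t\log(1/\delta)$ term, and a union bound over the at most six high-probability concentration events incurs the factor of $6$ in the failure probability.
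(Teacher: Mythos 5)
Your proposal is correct and follows essentially the same route as the paper's proof: the same six-term decomposition, the same handling of $\term{4}$ and $\term{6}$ via the injected positive bias, the AM-GM cancellation, the loss-shifting trick with $z_t=\ellhat_{t,j_t}+b_{t,j_t}$ on $t\in\calT$, and the same final tuning of $\eta=\beta=\gamma$. One small repair: the admissibility bound $z_t\le 3/\eta$ does not follow from $W_{t,j_t}\ge\eta/K$ alone (that would only give $z_t\lesssim K/\eta$ and reintroduce a $\poly(K)$ factor); you need $W_{t,j_t}=1-\wt{p}_{t,j_t}\ge\frac{K-1}{K}\eta\ge\eta/2$, which holds because $j_t\in\bar{S}_t$ is strongly observable without a self-loop and is hence observed by all other $K-1$ nodes, each carrying at least $\eta/K$ probability from the uniform exploration.
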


To the best of our knowledge, this is the first optimal high-probability regret bound for general strongly observable graphs, importantly without any $\text{poly}(K)$ dependence. While in this theorem we assume the knowledge of $\alpha_t$ for all $t\in [T]$ to tune the parameters $\eta$, $\beta$ and $\gamma$, a standard doubling trick can be applied to remove this requirement and make~\pref{alg: strong-obs} completely parameter-free.\footnote{This can be achieved efficiently by applying a standard doubling trick on the quantity $B_t=\frac{1}{\sqrt{\sum_{\tau=1}^tQ_{\tau}}}$, $t\in[T]$.}


\setcounter{AlgoLine}{0}
\section{High-Probability Regret for Weakly Observable Graphs}\label{sec: weakly}
\begin{algorithm}[t]
\caption{Algorithm for Weakly Observable Graphs}\label{alg: weakly}
\textbf{Input:} Parameters $\eta, \gamma, \eps$.

\textbf{Define:} Regularizer $\psi(p)=\frac{1}{\eta}\sum_{i=1}^K p_i \ln p_i$.

\textbf{Initialize:} $p_{1,i}=\frac{1}{K}$ for all $i \in [K]$.

\For{$t=1,2,\dots, T$}{

\nl Receive the feedback graph $G_t$ and (approximately) find a smallest weakly dominating set $D_t$.

\nl Let $\widetilde p_t=(1-\eps|D_t|)p_t+\eps \cdot \mathbf{1}_{D_t}$ ($\mathbf{1}_{D_t}$ is a vector with $1$ for coordinates in $D_t$ and $0$ otherwise). \label{line: uniform_dominating}

\nl Sample action $i_t \sim \widetilde p_t$.

\nl Receive feedback $\ell_{t,j}$ for all $j$ such that $i_t\in \Nin_t(j)$.

\nl
Construct estimator $\ellhat_t \in\R^K$ such that 
$\ellhat_{t,i}=\frac{\ell_{t,i}\mathbbm{1}\{i_t\in \Nin_t(i)\}}{W_{t,i}+\gamma\mathbbm{1}\{i\in S_t\}}$
where $W_{t,i}=\sum_{j\in \Nin_t (i)}\wt{p}_{t,j}$.
\label{line: loss_estimator_weakly}

\nl
Compute $p_{t+1}=\argmin_{p\in\Delta_K}\big\{\langle p,\ellhat_{t}\rangle+{\brgmd}_{\psi}(p,p_{t})\big\}$.
}
\end{algorithm}
In this section, we study the setting where the feedback graph $G_t$ is weakly observable for all $t \in [T]$. Under the uninformed setting,~\citep[Theorem 9]{alon2015online} proves that the lower bound of expected regret is $\Omega(K^{\nicefrac{1}{3}}T^{\nicefrac{2}{3}})$. To get rid of the $\poly(K)$ dependence, we thus consider the informed setting, in which $G_t$ is revealed to the learner before she selects $i_t$. We propose a simple algorithm to achieve $\otil(T^{\nicefrac{1}{3}}(\sum_{t=1}^Td_t)^{\nicefrac{1}{3}}+\frac{1}{T}\sum_{t=1}^T d_t)$ high-probability regret bound. 

Our algorithm is summarized in~\pref{alg: weakly}, which is a combination of \expthreeG~\citep{alon2015online2} and \expthreeix. Following \expthreeG, we add uniform exploration over a smallest weakly dominating set (\pref{line: uniform_dominating}).\footnote{While finding it exactly is computational hard, it suffices to find an approximate one with size $\order(d_t\log K)$, which can be done in polynomial time.} In this way, each weakly observable node has at least $\eps$ probability to be observed, which is essential to control the variance of the loss estimators. Similar to \pref{alg: strong-obs}, we apply implicit exploration for nodes with self-loops when constructing their loss estimators (\pref{line: loss_estimator_weakly}). Different from~\pref{alg: strong-obs}, we do not need to inject bias any more. This is because with the combination of uniform exploration and implicit exploration, our algorithm already achieves $\otil(T^{\nicefrac{2}{3}})$ bound which is optimal for weakly observable graphs. Our main result in summarized below (see \pref{app: weak} for the proof).
\begin{theorem}\label{thm: weak_high_prob}
Algorithm \ref{alg: weakly} with parameter $\eps=\min\{\frac{1}{2},T^{\nicefrac{1}{3}}(\sum_{t=1}^Td_t)^{-\nicefrac{2}{3}}\ln(1/\delta)^{\nicefrac{1}{3}}\}$, $\gamma=\sqrt{\frac{\ln(1/\delta)}{\sum_{t=1}^T \tilde \alpha_t}}$, $\eta=\min\left\{T^{-\nicefrac{1}{3}}(\sum_{t=1}^Td_t)^{-\nicefrac{1}{3}}\ln(1/\delta)^{-\nicefrac{1}{3}},\gamma \right\}$ ensures with probability at least $1-4\delta$:
\begin{align}
     \Reg \le \otil\pa{T^{\frac{1}{3}}\left(\sum_{t=1}^Td_t\right)^{\frac{1}{3}}\ln^{\frac{1}{3}}(1/\delta)+\frac{1}{T}\sum_{t=1}^T d_t \ln\frac{1}{\delta}+\sqrt{\sum_{t=1}^T \tilde \alpha_t \ln\frac{1}{\delta}}+\max_{t \in [T]} \tilde \alpha_t\ln\frac{1}{\delta}}, \nonumber
\end{align}
where $\tilde \alpha_t$ is the independence number of the subgraph induced by nodes with self-loops in $G_t$.
\end{theorem}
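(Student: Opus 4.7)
The plan is to follow the same six-term decomposition as in \pref{eqn: strong-regret-full-decompose}, exploiting two simplifications afforded by \pref{alg: weakly}: the bias vector $b_t$ is set to zero, and the uniform exploration is restricted to the weakly dominating set $D_t$, i.e., $\wt{p}_t=(1-\eps|D_t|)p_t+\eps\mathbf{1}_{D_t}$. I will then bound the six resulting terms separately against an arbitrary fixed $j\in[K]$.

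Three of the terms are easy. \term{1} is $\otil(\sqrt{T})$ by Azuma's inequality. \term{2}$\,=\sum_t\langle\wt{p}_t-p_t,\ell_t\rangle\le \eps\sum_t|D_t|=\otil(\eps\sum_t d_t)$, where an $\order(d_t\log K)$-size weakly dominating set is obtained by the standard greedy approximation. \term{3} vanishes since $\ellhat_{t,i}$ is unbiased for $i\in\bar S_t$. For \term{5}, the estimator restricted to $S_t$ coincides with the \expthreeix estimator, so I can reuse the argument following \pref{eqn: decompose-ix} almost verbatim: split \term{5} into the bias term $\sum_t(\ellhat_{t,j}-\ell_{t,j})\one\{j\in S_t\}$, the martingale deviation $\sum_t\sum_{i\in S_t}(W_{t,i}-\one\{i_t\in\Nin_t(i)\})\frac{p_{t,i}\ell_{t,i}}{W_{t,i}+\gamma}$, and the implicit-exploration bias $\gamma\sum_t\sum_{i\in S_t}\frac{p_{t,i}\ell_{t,i}}{W_{t,i}+\gamma}$; then invoke \pref{eqn: ix-1}, \pref{lem: stab_freedman}, and the bound $Q_t=\otil(\wt\alpha_t)$ (\pref{lem: graph} applied to the subgraph induced by $S_t$) to obtain $\term{5}=\otil\bigl(1/\gamma+\gamma\sum_t\wt\alpha_t+\sqrt{\sum_t\wt\alpha_t}+\max_t\wt\alpha_t\bigr)$.

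The two remaining pieces \term{4} and \term{6} both hinge on the observation that since $D_t$ weakly dominates every node without self-loop, $W_{t,i}\ge\eps$ for every $i\in\bar S_t$. For \term{4}, this forces the per-round martingale difference $X_t\triangleq\langle p_t-e_j,\ellhat_t\rangle_{\bar S_t}$ to satisfy $|X_t|\le 2/\eps$ and $\mathbb{E}_t[X_t^2]\le\|X_t\|_\infty\cdot\mathbb{E}_t[|X_t|]\le(2/\eps)\cdot 2=\order(1/\eps)$, so Freedman's inequality (\pref{lem: Strengthened-Freedman}) yields $\term{4}=\otil(\sqrt{T/\eps}+1/\eps)$. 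For \term{6}, the standard OMD bound $\term{6}\le \log K/\eta+\eta\sum_t\sum_i p_{t,i}\ellhat_{t,i}^2$ applies (the prerequisite $\eta\ellhat_{t,i}\le 1$ follows from $\eta\le\min\{\gamma,\eps\}$ under the stated parameters). The $S_t$ portion of the stability sum is $\otil(\eta\sum_t\wt\alpha_t+\eta\max_t\wt\alpha_t/\gamma)$ via \pref{lem: stab_freedman}. For the $\bar S_t$ portion, one has the conditional bound $\mathbb{E}_t[\sum_{i\in\bar S_t}p_{t,i}\ellhat_{t,i}^2]\le \sum_{i\in\bar S_t}p_{t,i}/W_{t,i}\le 1/\eps$; I lift this to a high-probability bound of $\otil(\eta T/\eps)$ via the pointwise inequality $\sum_{i\in\bar S_t}p_{t,i}\ellhat_{t,i}^2\le(1/\eps)\sum_{i\in\bar S_t}p_{t,i}\ellhat_{t,i}$ followed by a Freedman argument applied to the linear quantity.

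Summing all six pieces and choosing $\gamma=1/\sqrt{\sum_t\wt\alpha_t}$ collapses the strongly observable contribution to $\otil(\sqrt{\sum_t\wt\alpha_t}+\max_t\wt\alpha_t)$ (and, since $\eta\le\gamma$ by construction, automatically absorbs $\eta\sum_t\wt\alpha_t$ and $\eta\max_t\wt\alpha_t/\gamma$). The remaining weakly-observable terms are $1/\eta$, $\eps\sum_t d_t$, $\eta T/\eps$, $\sqrt{T/\eps}$, and $1/\eps$; the choices $\eps=T^{1/3}(\sum_t d_t)^{-2/3}$ and $\eta=T^{-1/3}(\sum_t d_t)^{-1/3}$ (modulo the $\ln^{1/3}(1/\delta)$ factors and the $\eps\le 1/2$ cap) are precisely those that drive the first four to the common value $T^{1/3}(\sum_t d_t)^{1/3}$, with the leftover $1/\eps$ contributing the lower-order $\frac{1}{T}\sum_t d_t$ piece in the theorem statement. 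The main technical hurdle I anticipate is exactly the high-probability control of the $\bar S_t$ stability sum, whose per-round value can be as large as $1/\eps^2$; the two-stage reduction outlined above (first bound the quadratic sum by a linear one using $\ellhat_{t,i}\le 1/\eps$, then apply Freedman) is the only clean route around this.
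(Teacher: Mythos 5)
Your proposal is correct and essentially mirrors the paper's own proof: your six-term split with $b_t=0$ regroups into the paper's four-term decomposition (your Terms 3--5 together form its deviation term, handled by the same Freedman/\pref{lem: stab_freedman} arguments with \pref{lem: graph} applied to the self-loop subgraph, and your Term 6 is its stability term with the identical quadratic-to-linear reduction and the $W_{t,i}\ge\eps$ bound), followed by the same parameter tuning. Two nitpicks only: in Term 4 the martingale depends on $j$, so take a union bound over $j\in[K]$ (costing just $\log K$); and the OMD stability bound needs no prerequisite like $\eta\hat{\ell}_{t,i}\le 1$ since the estimators are nonnegative --- which is fortunate, because $\eta\le\eps$ is not actually guaranteed by the stated parameters.
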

When $G_t=G$ for all $t$, our bound becomes $\otil\left(d^{\nicefrac{1}{3}}T^{\nicefrac{2}{3}}+\sqrt{\alpha T}+\alpha+d\right)$, where $\alpha$ is the independence number of the subgraph of $G$ induced by its nodes with self-loops and $d$ is the weak domination number of $G$. This even improves over the $\otil(d^{\nicefrac{1}{3}}T^{\nicefrac{2}{3}}+\sqrt{KT})$ \textit{expected} regret bound of~\citep{alon2015online2}, removing any $\text{poly}(K)$ dependence.

To prove~\pref{thm: weak_high_prob}, similar to the analysis in~\pref{sec: strongly}, we first decompose the regret against any fixed action $j\in[K]$ as follows:
\begin{align}\label{eqn: reg-decompose-weak}
    \underbrace{\sum_{t=1}^T\left(\ell_{t,i_t}-\inner{\wt{p}_t, \ell_t}\right)}_{\term{a}} +\underbrace{\sum_{t=1}^T\inner{\wt{p}_t-p_t, \ell_t}}_{\term{b}} + \underbrace{\sum_{t=1}^T\inner{p_t-e_{j}, \ell_t-\ellhat_t}}_{\term{c}} + \underbrace{\sum_{t=1}^T\inner{p_t-e_{j}, \ellhat_t}}_{\term{d}}.
\end{align}
$\term{a}$ is of order $\otil(\sqrt{T})$ via Azuma's inequality.
By the definition of $\wt{p}_t$, $\term{b}$ is of order $\order(\eps \sum_{t=1}^T d_t)$.
To bound $\term{c}$, we state the following lemma, which controls the deviation between real losses and loss estimators. The proof starts by considering nodes in $S_t$ and $\bar S_t$ separately, followed by standard concentration inequalities; see~\pref{app: weak} for details.
\begin{lemma}\label{lem: weak_deviation}
\pref{alg: weakly} guarantees the following with probability at least $1-\delta$
\begin{align}
    \sum_{t=1}^T \inner{p_t,\ell_t-\ellhat_t} \le \otil\left(\sqrt{\sum_{t=1}^T \tilde \alpha_t \ln\frac{1}{\delta}}+\gamma \sum_{t=1}^T \tilde \alpha_t+\max_{t \in [T]} \tilde \alpha_t\ln\frac{1}{\delta}+\sqrt{\frac{T}{\eps} \ln\frac{1}{\delta}}\right). \nonumber
\end{align}
Furthermore, with probability at least $1-\delta$, for any $i \in [K]$, the following inequality holds:
\begin{align}
    \sum_{t=1}^T \left(\ellhat_{t,i}-\ell_{t,i}\right) \le \otil\left(\sqrt{\frac{T}{\eps} \ln\frac{1}{\delta}}+\frac{1}{\eps}\ln\frac{1}{\delta}+\frac{1}{\gamma}\ln\frac{1}{\delta}\right). \nonumber
\end{align}
\end{lemma}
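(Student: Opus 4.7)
The plan is to decompose both sums according to whether each node has a self-loop at round $t$, since the estimator~(\pref{line: loss_estimator_weakly}) takes the implicit-exploration form for $i\in S_t$ and the standard unbiased inverse-probability form for $i\in\bar S_t$. The $S_t$ contributions can be reduced directly to~\pref{lem: stab_freedman}, while the $\bar S_t$ contributions will be controlled via Freedman's inequality, exploiting the key fact that the uniform exploration over the weakly dominating set $D_t$ in~\pref{line: uniform_dominating} guarantees $W_{t,i}\ge\eps$ for every $i\in\bar S_t$ (since there exists $j\in D_t\cap\Nin_t(i)$ with $\wt p_{t,j}\ge\eps$).

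\textbf{First inequality.} I would write $\sum_t\inner{p_t,\ell_t-\ellhat_t}=\sum_t\inner{p_t,\ell_t-\ellhat_t}_{S_t}+\sum_t\inner{p_t,\ell_t-\ellhat_t}_{\bar S_t}$. For the $S_t$ piece, the algebraic identity
\[
\sum_{i\in S_t}p_{t,i}(\ell_{t,i}-\ellhat_{t,i})=\sum_{i\in S_t}\frac{p_{t,i}\ell_{t,i}}{W_{t,i}+\gamma}\bigl(W_{t,i}-\mathbbm{1}\{i_t\in\Nin_t(i)\}\bigr)+\gamma\sum_{i\in S_t}\frac{p_{t,i}\ell_{t,i}}{W_{t,i}+\gamma}
\]
lets me apply the second inequality of~\pref{lem: stab_freedman} to the first summand, giving $\otil(\sqrt{\sum_t\tilde\alpha_t\ln(1/\delta)}+\max_t\tilde\alpha_t\ln(1/\delta))$, while the second summand is at most $\gamma\sum_t Q_t\le\otil(\gamma\sum_t\tilde\alpha_t)$ by~\pref{lem: graph} (applied to the self-aware induced subgraph on $S_t$, whose independence number is $\tilde\alpha_t$). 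For the $\bar S_t$ piece, $\mathbb{E}_t[\ellhat_{t,i}]=\ell_{t,i}$ so the sequence is a zero-mean martingale difference. Setting $A_{t,j}=\sum_{i\in\bar S_t:\,j\in\Nin_t(i)}\frac{p_{t,i}\ell_{t,i}}{W_{t,i}}$, two simple identities $\sum_j\wt p_{t,j}A_{t,j}=\sum_{i\in\bar S_t}p_{t,i}\ell_{t,i}\le 1$ and $A_{t,j}\le 1/\eps$ (using $W_{t,i}\ge\eps$ and $\sum_i p_{t,i}\le 1$) yield a per-round conditional second moment
\[
\mathbb{E}_t\Bigl[\Bigl(\sum_{i\in\bar S_t}p_{t,i}\ellhat_{t,i}\Bigr)^{2}\Bigr]=\sum_j\wt p_{t,j}A_{t,j}^{2}\le\bigl(\max_j A_{t,j}\bigr)\sum_j\wt p_{t,j}A_{t,j}\le \tfrac{1}{\eps},
\]
so Freedman's inequality yields $\otil(\sqrt{T\ln(1/\delta)/\eps})$ plus a lower-order range contribution.

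\textbf{Second inequality.} I fix $i\in[K]$ and split $\sum_t(\ellhat_{t,i}-\ell_{t,i})$ into the sub-sum over $\{t:i\in S_t\}$ and the sub-sum over $\{t:i\in\bar S_t\}$. The first sub-sum is handled by the standard Neu-style moment-generating-function argument underlying~\pref{eqn: ix-1}, which gives $\le\ln(2K/\delta)/(2\gamma)$ uniformly in $i$ with probability $\ge 1-\delta$ and is exactly the $\frac{1}{\gamma}\ln(1/\delta)$ term in the target bound. In the second sub-sum the estimator is unbiased with $\ellhat_{t,i}\le 1/\eps$ and per-round conditional second moment $\mathbb{E}_t[\ellhat_{t,i}^2]=\ell_{t,i}^2/W_{t,i}\le 1/\eps$; Freedman's inequality together with a union bound over $i\in[K]$ yields $\otil(\sqrt{T\ln(1/\delta)/\eps}+\ln(1/\delta)/\eps)$. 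Adding the two pieces gives the second claim.

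\textbf{Main obstacle.} The delicate step is the per-round variance bound for the $\bar S_t$ contribution in the first inequality. A na\"ive estimate would only give $O(1/\eps^{2})$, since individual terms $p_{t,i}\ellhat_{t,i}$ can be as large as $1/\eps$, and this would inflate the rate to $\sqrt{T}/\eps$ and break the target $\sqrt{T/\eps}$ bound. The ``max-times-sum'' trick above, together with the fact that the full sum $\sum_j\wt p_{t,j}A_{t,j}$ is deterministically at most $1$, is what recovers the sharp $O(1/\eps)$ per-round variance. Everything else is a fairly mechanical combination of~\pref{lem: stab_freedman},~\pref{lem: graph}, and Freedman's inequality.
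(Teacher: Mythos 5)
Your proposal is correct and follows essentially the same route as the paper: the same $S_t$/$\bar S_t$ decomposition, reduction of the $S_t$ part to \pref{lem: stab_freedman} and \pref{lem: graph} on the self-loop induced subgraph, a Freedman bound with per-round second moment $O(1/\eps)$ for the $\bar S_t$ part, and the Neu-style MGF argument plus Freedman with a union bound for the per-coordinate inequality. Your explicit computation of the conditional second moment via $\sum_j \wt p_{t,j}A_{t,j}^2$ is just a more detailed version of the paper's one-line bound (pull out $\max_i \ellhat_{t,i}\le 1/\eps$ and use unbiasedness), so the arguments coincide in substance.
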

Next, we prove the following lemma bounding~$\term{d}$ (see \pref{app: weak} again for the full proof).
\begin{lemma}\label{lem: weak_stability}
\pref{alg: weakly} guarantees that with probability at least $1-\delta$
\begin{align}
    \term{d} \le \otil\left(\frac{1}{\eta}+\eta\sum_{t=1}^T \tilde \alpha_t+\frac{\eta}{\gamma}\max_t \tilde \alpha_t\ln\frac{1}{\delta}+\eta\sqrt{\frac{T}{\eps^3}\ln\frac{1}{\delta}}+\frac{\eta T}{\eps}+\frac{\eta}{\eps^2}\ln\frac{1}{\delta}\right). \nonumber
\end{align}
\end{lemma}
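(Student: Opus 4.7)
The plan is to apply the standard Online Mirror Descent analysis with the entropy regularizer, and then control the resulting stability term by splitting the sum over actions into contributions from nodes with self-loops ($S_t$) and nodes without ($\bar{S}_t$), handling each part with a different tool. Since the loss estimators in~\pref{alg: weakly} are non-negative and the chosen $\eta$ is small enough that $\eta\ellhat_{t,i}$ stays bounded, standard OMD analysis yields
\begin{align*}
    \term{d} \le \frac{\log K}{\eta} + \eta \sum_{t=1}^T \sum_{i=1}^K p_{t,i}\, \ellhat_{t,i}^2,
\end{align*}
which already accounts for the $\otil(1/\eta)$ contribution.

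For the self-loop part, I would use $\ellhat_{t,i} \le 1/(W_{t,i}+\gamma)$ for $i\in S_t$ to get $p_{t,i}\ellhat_{t,i}^2 \le \frac{p_{t,i}}{W_{t,i}+\gamma}\ellhat_{t,i}$. Summing over $i\in S_t$ and $t\in[T]$ and applying~\pref{lem: stab_freedman} bounds this by $\sum_t Q_t$ plus a deviation of order $U_T\log(KT/(\delta\gamma)) = \otil((\max_t Q_t/\gamma)\log(1/\delta))$. Combining with $Q_t \le \otil(\tilde\alpha_t)$, obtained by invoking the graph inequality of~\pref{lem: graph} on the subgraph induced by nodes with self-loops, this produces exactly the $\eta\sum_t\tilde\alpha_t$ and $(\eta/\gamma)\max_t\tilde\alpha_t\log(1/\delta)$ terms in the statement.

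For the non-self-loop part, the key structural observation is that the uniform exploration of mass $\eps$ on every node of the weakly dominating set $D_t$ in~\pref{line: uniform_dominating} forces $W_{t,i} \ge \eps$ for each $i\in\bar{S}_t$, because some node in $D_t$ observes $i$. This gives $\ellhat_{t,i}\le 1/\eps$ and therefore $\sum_{i\in\bar{S}_t}p_{t,i}\ellhat_{t,i}^2 \le (1/\eps) X_t$, where $X_t := \sum_{i\in\bar{S}_t}p_{t,i}\ellhat_{t,i}$. Since $\E_t[X_t]\le 1$, $X_t \le 1/\eps$, and hence $\E_t[X_t^2] \le 1/\eps$, a single application of Freedman's inequality gives $\sum_t X_t \le T + \otil(\sqrt{T\log(1/\delta)/\eps} + \log(1/\delta)/\eps)$ with high probability; multiplying by $\eta/\eps$ then produces the last three terms in the bound.

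The main obstacle will be juggling the technical prerequisites of these tools simultaneously: the OMD stability bound requires $\eta\ellhat_{t,i}$ to stay small, \pref{lem: stab_freedman} requires $\gamma \le 1/2$, and Freedman's inequality must be invoked with the correct variance proxy. All of these are ensured by the parameter choices in~\pref{thm: weak_high_prob}, after which a union bound over the two high-probability events completes the argument.
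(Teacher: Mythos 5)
Your proposal is correct and follows essentially the same route as the paper: standard OMD analysis, then splitting the stability term into $S_t$ and $\bar S_t$, handling the self-loop part via \pref{lem: stab_freedman} together with \pref{lem: graph} on the induced subgraph, and the non-self-loop part via Freedman's inequality using $W_{t,i}\ge\eps$ from the exploration over the dominating set. The only cosmetic difference is that you apply Freedman to $X_t=\sum_{i\in\bar S_t}p_{t,i}\ellhat_{t,i}$ and multiply by $1/\eps$ afterwards, whereas the paper centers $\sum_{i\in\bar S_t}\tfrac{p_{t,i}}{W_{t,i}}(\ellhat_{t,i}-\ell_{t,i})$ directly; both yield the same $\eta\sqrt{T/\eps^3}$, $\eta T/\eps$, and $\eta/\eps^2$ terms.
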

\begin{proofsk} First, we apply standard OMD analysis~\citep{bubeck2012regret} and obtain
\begin{align}
    \term{d} 
    &\le \frac{\log K}{\eta}+2\eta \sum_{t=1}^T \sum_{i \in S_t} p_{t,i}\hatell_{t,i}^2+
   2\eta \sum_{t=1}^T \sum_{i \in \bar S_t} p_{t,i}\hatell_{t,i}^2 \nonumber.
\end{align}
We can bound the second term by $\otil\left(\eta\sum_{t=1}^T \tilde \alpha_t+\frac{\eta}{\gamma}\max_t \tilde \alpha_t\ln(1/\delta)\right)$ using~\pref{eqn: alpha-eqn-1} in~\pref{lem: stab_freedman}. For the third term, based on the definition of $\ellhat_{t,i}$ for $i\in \bar{S}_t$, we decompose it as follows
\begin{align}
    \eta \sum_{t=1}^T \sum_{i \in \bar S_t} p_{t,i}\hatell_{t,i}^2
    \le \eta \sum_{t=1}^T \sum_{i \in \bar S_t} \frac{p_{t,i}}{ W_{t,i}}(\ellhat_{t,i}-\ell_{t,i}) + \eta \sum_{t=1}^T \sum_{i \in \bar S_t} \frac{p_{t,i}}{W_{t,i}} \ell_{t,i}. \nonumber
\end{align}
We bound the first term by $\otil(\eta\sqrt{T/\epsilon^3}+\eta/\epsilon^2)$ using Freedman's inequality. With the help of uniform exploration, we know that $W_{t,i}\geq \epsilon$ and thus the second term is bounded by $\frac{\eta T}{\eps}$.  
\end{proofsk}
With the help of~\pref{lem: weak_deviation} and~\pref{lem: weak_stability}, we are ready to prove~\pref{thm: weak_high_prob}.
\begin{proof}[Proof of~\pref{thm: weak_high_prob}]
Putting results from~\pref{eqn: reg-decompose-weak},~\pref{lem: weak_deviation}, and~\pref{lem: weak_stability} together, our regret bound becomes
\begin{align}
    \Reg &\le \otil\left(\eps \sum_{t=1}^T d_t+\sqrt{\sum_{t=1}^T \tilde \alpha_t \ln\frac{1}{\delta}}+\gamma \sum_{t=1}^T \tilde \alpha_t+\max_t \tilde \alpha_t\ln\frac{1}{\delta}+\frac{1}{\gamma}\log\frac{1}{\delta}+\sqrt{\frac{T}{\eps} \ln\frac{1}{\delta}}\right) \nonumber \\
    &+\otil\left(\frac{1}{\eta}+\eta\sum_{t=1}^T \tilde \alpha_t+\frac{\eta}{\gamma}\max_t \tilde \alpha_t\ln\frac{1}{\delta}+\eta\sqrt{\frac{T}{\eps^3}\ln\frac{1}{\delta}}+\frac{\eta T}{\eps}+\frac{\eta}{\eps^2}\ln\frac{1}{\delta}\right). \nonumber
\end{align}
By picking $\eps$, $\eta$, and $\gamma$ as stated in~\pref{thm: weak_high_prob}, we achieve that with probability at least $1-\delta$,
\begin{align}
    \Reg \le \otil\pa{T^{\frac{1}{3}}\left(\sum_{t=1}^Td_t\right)^{\frac{1}{3}}\ln^{\frac{1}{3}}\frac{1}{\delta}+\frac{1}{T}\sum_{t=1}^T d_t \ln\frac{1}{\delta}+\sqrt{\sum_{t=1}^T \tilde \alpha_t \ln\frac{1}{\delta}}+\max_t \tilde \alpha_t\ln\frac{1}{\delta}}. \nonumber
\end{align}
\end{proof}
Again, we can apply the standard doubling trick to tune $\eta$, $\gamma$, and $\eps$ adaptively without requiring the knowledge of $d_t$ and $\tilde \alpha_t$ for $t\in[T]$ ahead of time.

\section{Conclusions and Open Problems}
In this work, we design algorithms that achieve near-optimal high-probability regret bounds for adversarial MAB with time-varying feedback graphs for both the strongly observable case and the weakly observable case. We achieve $\otil((\sum_{t=1}^T\alpha_t)^{\nicefrac{1}{2}}+\max_{t\in[T]}\alpha_t)$ regret for strongly observable graphs, improving and extending the results of~\citep{neu2015explore}, which only considers self-aware graphs and suffers an $\otil(K)$ term. In addition, we derive the first high-probability regret bound for weakly observable graph setting, which also depends on $K$ only logarithmically and is order optimal.

One open problem is whether one can achieve high-probability data-dependent regret bounds for this problem, such as the so-called small-loss bounds which scales with the loss of the best action. \cite{lee2020bias} achieved expected regret bound $\otil(\sqrt{\kappa L_{\star}})$ for a fixed graph where $\kappa$ is the clique partition number and $L_{\star}$ is the loss of the best action. Achieving the same bound with high-probability under an adaptive adversary appears to require new ideas.


\bibliography{alt2023-sample}

\appendix

\section{Omitted Details in~\pref{sec: strongly}}\label{app: strongly}

\subsection{Proof of~\pref{thm: exp3-ix}}\label{app: proof-exp3-ix}
In this section, we prove~\pref{thm: exp3-ix}, which shows that the regret of the \expthreeix algorithm~\citep{neu2015explore} does not necessarily has linear dependence on the number of actions $K$ (that appears in the original analysis), but is instead $\otil((\sum_{t=1}^T\alpha_t)^{\nicefrac{1}{2}}+\max_{t\in[T]}\alpha_t)$ with high probability. 

First, we prove~\pref{lem: stab_freedman}, which shows a tighter concentration between $\ellhat_{t}$ and $\ell_t$ and is crucial to the improvement from $\otil(K)$ to $\otil(\max_{t\in[T]}\alpha_t)$.

\begin{proof}[Proof of~\pref{lem: stab_freedman}]
    We first prove~\pref{eqn: alpha-eqn-1}. Let $X_{t,1}=\sum_{i \in S_t}\frac{p_{t,i}}{W_{t,i}+\gamma}(\ellhat_{t,i}-\ell_{t,i})$ and $Q_t=\sum_{i\in S_t}\frac{p_{t,i}}{W_{t,i}+\gamma}$. According to the definition of $\ellhat_{t,i}$ and the fact that $\ell_t\in[0,1]^K$, we know that
    \begin{align*}
        |X_{t,1}| \le \sum_{i \in S_t}\frac{p_{t,i}}{(W_{t,i}+\gamma)}\left(\frac{1}{\gamma}+1\right) \le \frac{2Q_t}{\gamma},
    \end{align*}
    where we use the fact that $\gamma\leq \frac{1}{2}$. Next, consider the term $\mathbb{E}_t[X_{t,1}^2]$.
    \begin{align}
    \E_t[X_{t,1}^2] &\le \E_t\left[\left(\sum_{i \in S_t}\frac{p_{t,i}}{W_{t,i}+\gamma}\ellhat_{t,i}\right)^2\right]\nonumber \\
    & \le \E_t\left[\left(\sum_{i \in S_t}\frac{p_{t,i}}{(W_{t,i}+\gamma)^2}\mathbbm{1}\{i_t\in \Nin_t(i)\}\right)\left(\sum_{j \in S_t}\frac{p_{t,j}}{(W_{t,j}+\gamma)^2}\mathbbm{1}\{i_t\in \Nin_t(j)\}\right)\right] \nonumber\\
    & \le \E_t\left[\left(\sum_{i \in S_t}\frac{p_{t,i}}{(W_{t,i}+\gamma)^2}\mathbbm{1}\{i_t\in \Nin_t(i)\}\right)\left(\sum_{j \in S_t}\frac{p_{t,j}}{(W_{t,j}+\gamma)^2}\right)\right] \nonumber\\
    &\leq \frac{Q_t}{\gamma}\mathbb{E}_t\left[\sum_{i\in S_t}\frac{p_{t,i}}{(W_{t,i}+\gamma)^2}\mathbbm{1}\{i_t\in \Nin_t(i)\}\right] \leq \frac{Q_t^2}{\gamma}.\nonumber
\end{align}
Note that $Q_t\leq K$. Therefore, $X_{t,1}\leq \frac{2K}{\gamma}$ and $\mathbb{E}_t[X_{t,1}^2]\leq \frac{K^2}{\gamma}$. Then, using~\pref{lem: Strengthened-Freedman}, we know that with probability at least $1-\delta$,
\begin{align*}
    \sum_{t=1}^TX_{t,1} 
    &\leq 3\sqrt{\sum_{t=1}^T\frac{Q_t^2}{\gamma}\log\left(\frac{2K}{\delta}\sqrt{\frac{T}{\gamma}}\right)} + 2\max\{1,\max_{t\in[T]}X_{t,1}\}\log\left(\frac{2K}{\delta}\sqrt{\frac{T}{\gamma}}\right). \\
    &\leq \order\left(\sum_{t=1}^T\frac{Q_t^2}{\gamma U_T} + U_T\log\left(\frac{KT}{\delta\gamma}\right)\right),
\end{align*}
where $U_T = \max\{1, \max_{t\in[T]}X_{t,1}\}$ and the last inequality is because of AM-GM inequality.

Next, we prove~\pref{eqn: alpha-eqn-2}. Let $X_{t,2}=\sum_{i \in S_t}( W_{t,i}-\mathbbm{1}\{i_t\in\Nin_t(i)\})\frac{p_{t,i}\ell_{t,i}}{ W_{t,i}+\gamma}$. Direct calculation shows that $|X_{t,2}| \le 2Q_t$. Consider its conditional variance:
\begin{align}
    \mathbb{E}_t[X_{t,2}^2] &\le \E_t\left[\left(\sum_{i \in S_t}\frac{p_{t,i}}{ W_{t,i}+\gamma}\mathbbm{1}\{i_t\in\Nin_t(i)\}\right)^2\right] \nonumber\\
    &=\sum_{i \in S_t}p_{t,i}\sum_{j \in S_t}\frac{p_{t,j}}{ W_{t,j}+\gamma} \le Q_t.\nonumber
\end{align}
Define $\iota_1=\ln\left(\frac{2\max_t Q_t+2\sqrt{\sum_{t=1}^T Q_t}}{\delta}\right)$. Applying~\pref{lem: Strengthened-Freedman}, we can obtain that with probability at least $1-\delta$,
\begin{align*}
    \sum_{t=1}^T X_{t,2} \le \order\left(\sqrt{\sum_{t=1}^T Q_t \iota_1}+\max_{t\in[T]} Q_t \iota_1\right).
\end{align*}

\end{proof}

Next, we are ready to prove~\pref{thm: exp3-ix}.

\begin{proof}[Proof of~\pref{thm: exp3-ix}]
    According to~\pref{eqn: decompose-ix}, for an arbitrary comparator $j\in [K]$, we decompose the overall regret as follows:
    \begin{align}\label{eqn: decompose-ix-app}
    \sum_{t=1}^T(\ell_{t,i_t}-\ell_{t,j}) &= \sum_{t=1}^T\inner{p_{t}-e_j,\ellhat_t} + \sum_{t=1}^T\left(\ell_{t,i_t}-\inner{p_t,\ell_t}\right) + \sum_{t=1}^T\left(\ellhat_{t,j}-\ell_{t,j}\right) \nonumber\\
    &\qquad + \sum_{t=1}^T\sum_{i=1}^K\left(W_{t,i}-\one\{i_t\in\Nin_t(i)\}\right)\frac{p_{t,i}\ell_{t,i}}{W_{t,i}+\gamma} + \sum_{t=1}^T\sum_{i=1}^K\frac{\gamma p_{t,i}\ell_{t,i}}{W_{t,i}+\gamma}.
    \end{align}

According to the standard analysis of \expthree~\citep{bubeck2012regret}, the first term of~\pref{eqn: decompose-ix-app} can be bounded as follows:
\begin{align*}
    &\sum_{t=1}^T\inner{p_{t}-e_j, \ellhat_{t}} \\
    &\leq \frac{\log K}{\eta} + \eta\sum_{t=1}^T\sum_{i=1}^Kp_{t,i}\ellhat_{t,i}^2 \\ &\leq \frac{\log K}{\eta} + \eta\sum_{t=1}^T\sum_{i=1}^K\frac{p_{t,i}}{W_{t,i}+\gamma}\ellhat_{t,i}\\
     &\leq \frac{\log K}{\eta} + \eta\sum_{t=1}^T\sum_{i=1}^K\frac{p_{t,i}}{W_{t,i}+\gamma}\ell_{t,i} + \order\left(\eta\sum_{t=1}^T\frac{Q_t^2}{\max_{\tau\in[T]}Q_\tau}+\frac{\eta\max_{t\in[T]}Q_t}{\gamma}\log\left(\frac{KT}{\delta\gamma}\right)\right),
\end{align*}
where the last inequality holds with probability at least $1-\delta$ according to~\pref{lem: stab_freedman}.

According to standard Hoeffding-Azuma inequality, we know that with probability at least $1-\delta$, the second term of~\pref{eqn: decompose-ix-app} is bounded as follows:
\begin{align*}
    \sum_{t=1}^T\left(\ell_{t,i_t}-\inner{p_t,\ell_t}\right)\leq \order\left(\sqrt{T\log\frac{1}{\delta}}\right).
\end{align*}

Based on Corollary 1 in~\citep{neu2015explore}, with probability at least $1-\delta$, the third term is bounded as follows: for all $j\in[K]$, 
\begin{align*}
    \sum_{t=1}^T\left(\ellhat_{t,j}-\ell_{t,j}\right)\leq \frac{\log(K/\delta)}{2\gamma}.
\end{align*}

The fourth term of~\pref{eqn: decompose-ix-app} can be bounded by using~\pref{lem: stab_freedman} and the final term of~\pref{eqn: decompose-ix-app} is bounded by $\order(\gamma\sum_{t=1}^TQ_t)$ (recall that $Q_t=\sum_{i\in S_t}\frac{p_{t,i}}{W_{t,i}+\gamma}$). Combining all the terms, we know that with probability at least $1-3\delta$, 
\begin{align*}
    \sum_{t=1}^T\left(\ell_{t,i_t}-\ell_{t,j}\right) &\leq \otil\left(\frac{1}{\eta} + \frac{\log(1/\delta)}{\gamma}+\left(\eta+\gamma\right)\sum_{t=1}^TQ_t+\frac{\eta\max_{t\in[T]}Q_t}{\gamma}\log\frac{1}{\delta}\right) \\
    &\qquad + \otil\left(\sqrt{T\log\frac{1}{\delta}} + \sqrt{\sum_{t=1}^T\log\frac{1}{\delta}}+\max_{t\in[T]}Q_t\log\frac{1}{\delta}\right).
\end{align*}
According to~\pref{lem: graph}, we know that $Q_t=\otil(\alpha_t)$. Finally, choosing $\eta = \gamma = \sqrt{\frac{\log(1/\delta)}{\sum_{t=1}^T\alpha_t}}$ and picking $\delta'=\frac{\delta}{3}$ finishes the proof.
\end{proof}

\subsection{Proof of~\pref{thm: main-strong-obs}}\label{app: proof-strong-alg}

In this section, we prove our main result~\pref{thm: main-strong-obs} in the strongly observable setting.
To prove~\pref{thm: main-strong-obs}, according to~\pref{eqn: strong-regret-full-decompose}, we can decompose the overall regret with respect to any $j\in[K]$ as follows

\begin{align}\label{eqn: strong-regret-full-decompose-app}
    \sum_{t=1}^T\left(\ell_{t,i_t}-\ell_{t,j}\right) &\leq \underbrace{\left(\sum_{t=1}^T\ell_{t,i_t}-\sum_{t=1}^T\inner{\wt{p}_t, \ell_t}\right)}_{\term{1}} + \underbrace{\left(\sum_{t=1}^T\inner{\wt{p}_t-p_t,\ell_t}\right)}_{\term{2}}\nonumber\\
    &\qquad + \underbrace{\left(\sum_{t=1}^T\inner{p_t-e_j, \ell_t}_{\bar{S}_t} - \sum_{t=1}^T\mathbb{E}_t\left[\inner{p_t-e_j, \ellhat_t}_{\bar{S}_t}\right] \right)}_{\term{3}} \nonumber\\
    &\qquad + \underbrace{\left(\sum_{t=1}^T\mathbb{E}_t\left[\inner{p_t-e_j, \ellhat_t}_{\bar{S}_t}\right] - \sum_{t=1}^T\inner{p_t-e_j, \ellhat_t}_{\bar{S}_t}\right)}_{\term{4}} \nonumber\\
    &\qquad + \underbrace{\left(\sum_{t=1}^T\inner{p_t-e_j,\ell_t-\ellhat_t}_{S_t}\right)}_{\term{5}} + \underbrace{\left(\sum_{t=1}^T\inner{p_t-e_j, \ellhat_t}\right)}_{\term{6}}.
\end{align}
With the help of Hoeffding-Azuma's inequality, we know that with probability at least $1-\delta$, $\term{1}\leq \order(\sqrt{T\log(1/\delta)})$. $\term{2}\leq \order(\eta T)$ because of the definition of $\wt{p}_t$ and $p_t$. $\term{3}=0$ as $\ellhat_{t,i}$ is an unbiased estimator of $\ell_{t,i}$ for $i\in\bar{S}_t$. In the next three sections, we bound $\term{4}$, $\term{5}$ and $\term{6}$ respectively.

\paragraph{Bounding~$\term{4}$.} Using Freedman's inequality, we prove the following lemma:
\begin{lemma}\label{lem: term-4}
    With probability at least $1-\delta$,
    \begin{align*}
        \term{4}\leq \left(2+\frac{4}{\eta}\right)\log\frac{1}{\delta} + 2\sqrt{\left(4T+\sum_{t=1}^T\frac{\mathbbm{1}\{t\in\calT, j\neq j_t\}}{W_{t,j_t}}\right)\log\frac{1}{\delta}}.
    \end{align*}
\end{lemma}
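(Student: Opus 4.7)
} The plan is to apply Freedman's inequality to the martingale difference sequence $X_t := \mathbb{E}_t[Y_t] - Y_t$, where $Y_t := \inner{p_t-e_j, \ellhat_t}_{\bar{S}_t}$. Since for $i \in \bar{S}_t$ the estimator $\ellhat_{t,i}$ in \pref{line:strong-estimator} is unbiased, $\{X_t\}_t$ is indeed a martingale difference sequence with respect to the natural filtration and $\sum_{t=1}^T X_t = \term{4}$. The goal then reduces to identifying an almost-sure bound $R$ with $|X_t|\leq R$ and a cumulative conditional variance bound $V \ge \sum_{t=1}^T \mathbb{E}_t[X_t^2]$, and reading off the conclusion from the standard Freedman-type tail bound of the shape $\sum_t X_t \leq R\log(1/\delta) + 2\sqrt{V\log(1/\delta)}$.

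To control $|X_t|$, I would first recall that, since the graph $G_t$ is strongly observable, any $i \in \bar{S}_t$ satisfies $\Nin_t(i) = [K]\setminus\{i\}$, so $W_{t,i} = 1-\wt{p}_{t,i}$ and $\ellhat_{t,i} = \ell_{t,i}\mathbbm{1}\{i_t\neq i\}/(1-\wt{p}_{t,i})$. If $t\notin\calT$, then all $\wt{p}_{t,i} \leq 1/2$ for $i\in\bar{S}_t$, whence $\ellhat_{t,i}\leq 2$ and a direct triangle inequality yields $|Y_t|, |\mathbb{E}_t Y_t|\leq 4$. If $t\in\calT$, the only potentially dangerous index is $j_t$, and the uniform exploration on \pref{line: uniform} gives $\wt{p}_{t,j_t} \leq 1-\eta + \eta/K$, i.e., $W_{t,j_t}\geq \eta/2$; combining this with the split $Y_t = -(1-p_{t,j})\ellhat_{t,j}\mathbbm{1}\{j\in\bar{S}_t\} + \sum_{i \in \bar{S}_t\setminus\{j\}}p_{t,i}\ellhat_{t,i}$, each component is bounded by either a constant or $1/W_{t,j_t} \leq 2/\eta$. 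This delivers the desired $R = 2 + 4/\eta$ (up to absolute constants).

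For the cumulative conditional variance, I would use $\mathbb{E}_t[X_t^2]\leq \mathbb{E}_t[Y_t^2]$ and again split on whether $j\in\bar{S}_t$ and whether $t\in\calT$. For $t\notin\calT$, $|Y_t|\leq 4$ already gives $\mathbb{E}_t[Y_t^2] = O(1)$. For $t\in\calT$ with $j = j_t$, the only term that could be large is $(1-p_{t,j_t})^2\,\mathbb{E}_t[\ellhat_{t,j_t}^2]\leq (1-p_{t,j_t})^2/W_{t,j_t}$; but the relation $1-p_{t,j_t} = (W_{t,j_t} + \eta(1/K-1))/(1-\eta)$ coming from the definition of $\wt p_t$ shows that $(1-p_{t,j_t})^2 \lesssim W_{t,j_t}^2$, so the quotient is again $O(1)$. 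The only case where the variance truly blows up is $t\in\calT$ and $j\neq j_t$: then the dominant term is $\mathbb{E}_t[(p_{t,j_t}\ellhat_{t,j_t})^2] = p_{t,j_t}^2/W_{t,j_t}\leq 1/W_{t,j_t}$, contributing exactly the $1/W_{t,j_t}$ summand appearing in $V$. Collecting these bounds gives $\sum_t \mathbb{E}_t[X_t^2]\leq 4T + \sum_t \mathbbm{1}\{t\in\calT, j\neq j_t\}/W_{t,j_t}$ (again up to constants that I would track carefully to recover the stated form).

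The main obstacle is the case $j = j_t$ in the variance computation: a naive triangle inequality would give a spurious $1/W_{t,j_t}$ term that is not allowed by the indicator $\mathbbm{1}\{j\neq j_t\}$ on the right-hand side. The needed cancellation relies on the identity relating $1-p_{t,j_t}$ to $W_{t,j_t}$ via the definition of $\wt p_t$; without this observation, summing would produce an uncontrolled $\sum_t 1/W_{t,j_t}$ even when $j = j_t$. Once this subtlety is resolved and the constants reconciled, applying Freedman's inequality with the above $R$ and $V$ yields the claim.
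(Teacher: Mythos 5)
Your proposal is correct and follows essentially the same route as the paper's proof: apply Freedman's inequality to the martingale difference $\mathbb{E}_t[\langle p_t-e_j,\ellhat_t\rangle_{\bar S_t}]-\langle p_t-e_j,\ellhat_t\rangle_{\bar S_t}$, bound the range by $2+4/\eta$ via the uniform exploration (so $W_{t,j_t}\geq \tfrac{K-1}{K}\eta$), and bound the conditional variance by the same case split, with the $1/W_{t,j_t}$ term appearing only when $t\in\calT$ and $j\neq j_t$. In particular, the cancellation you flag for $j=j_t$ is exactly the paper's argument, which uses $(1-p_{t,j_t})/W_{t,j_t}\leq 1/(1-\eta)$ to keep that case's variance $O(1)$.
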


\begin{proof}
    Let $Y_t=\mathbb{E}_t\left[\inner{p_t-e_j,\ellhat_t}_{\bar{S}_t}\right]-\inner{p_t-e_j, \ellhat_t}_{\bar{S}_t}$. Then,
\begin{align*}
    |Y_t|\leq \mathbb{E}_t\left[\inner{p_t-e_j,\ellhat_t}_{\bar{S}_t}\right]+ 2\sum_{i\in\bar{S}_t}\frac{p_{t,i}}{\frac{K-1}{K}\eta} \leq 2 + \frac{4}{\eta}.
\end{align*}
If $t\notin \calT$, then we know that $W_{t,i}\geq 1/2$ for all $i\in \bar{S}_t$ and
\begin{align*}
    \mathbb{E}_t[Y_t^2] &\leq \mathbb{E}_t\left[\inner{p_t-e_j, \ellhat_t}_{\bar{S}_t}^2\right] \leq 4.
\end{align*}
If $t\in \calT$, then we know that $W_{t,i}\geq 1/2$ for all $i\in \bar{S}_t$ except for $i=j_t$. When $j\neq j_t$, we can bound $\mathbb{E}_t[Y_t^2]$ as follows:
\begin{align*}
    \mathbb{E}_t[Y_t^2] &\leq \mathbb{E}_t\left[\inner{p_t-e_j, \ellhat_t}_{\bar{S}_t}^2\right] \\
    &\leq 2\mathbb{E}_t\left[\inner{p_t, \ellhat_t}_{\bar{S}_t}^2\right] + 2\mathbb{E}_t\left[\inner{e_j, \ellhat_t}_{\bar{S}_t}^2\right]\\
    &\leq 2\mathbb{E}_t\left[\sum_{i\in \bar{S}_t}p_{t,i}\ellhat_{t,i}^2\right] + \frac{2}{W_{t,j_t}}\leq 2\mathbb{E}_t\left[\sum_{i\in \bar{S}_t}\frac{p_{t,i}\ellhat_{t,i}}{W_{t,i}}\right] +\frac{2}{W_{t,j_t}}\leq 4+\frac{4}{W_{t,j_t}}.
\end{align*}
If $j=j_t$, we know that $\inner{p_t-e_j, \ellhat_{t}}_{\bar{S}_t} = \sum_{i\in \bar{S}_t, i\neq j_t}p_{t,i}\ellhat_{t,i}+(1-p_{t,j_t})\cdot \frac{1}{W_{t,j_t}}\leq 2 + \frac{1}{1-\eta}\leq 4$ as $\eta\leq \frac{1}{2}$. Then we know that $\mathbb{E}_t[Y_t^2]\leq 16$.

Therefore, according to Freedman's inequality~\pref{lem: Freedman}, we know that with probability at least $1-\delta$,
\begin{align*}
    \sum_{t=1}^TY_t&\leq \min_{\lambda\in [0,1/(2+4/\eta)]}\left\{\frac{\log(1/\delta)}{\lambda}+\lambda \sum_{t=1}^T\mathbb{E}_t[Y_t^2]\right\} \\
    &\leq \min_{\lambda\in [0,1/(2+4/\eta)]}\left\{\frac{\log(1/\delta)}{\lambda}+\lambda \left(16T+4\sum_{t=1}^T\frac{\mathbbm{1}\{t\in\calT, j\neq j_t\}}{W_{t,j_t}}\right)\right\} \\
    &\leq \left(2+\frac{4}{\eta}\right)\log\frac{1}{\delta} + 2\sqrt{\left(4T+\sum_{t=1}^T\frac{\mathbbm{1}\{t\in\calT, j\neq j_t\}}{W_{t,j_t}}\right)\log\frac{1}{\delta}}.
\end{align*}
\end{proof}

\paragraph{Bounding~$\term{5}$.} The following lemma gives a bound on~$\term{5}$. The proving technique is similar to the one that we use to bound the last three terms in~\pref{eqn: decompose-ix-app}.

\begin{lemma}\label{lem: term-5}
    With probability at least $1-2\delta$,
\begin{align*}
    \term{5}\leq \otil\left(\sqrt{\sum_{t=1}^TQ_t\log\frac{1}{\delta}}+\max_{t\in[T]}Q_t\log\frac{1}{\delta}+\gamma\sum_{t=1}^TQ_t+\frac{1}{\gamma}\log\frac{1}{\delta}\right).
\end{align*}
\end{lemma}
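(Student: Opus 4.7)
The plan is to mimic the argument used to bound the last three terms of~\pref{eqn: decompose-ix-app} in the proof of \pref{thm: exp3-ix}, now restricted to the sum over $S_t$. Concretely, for $i\in S_t$ the estimator is $\ellhat_{t,i}=\frac{\ell_{t,i}\mathbbm{1}\{i_t\in\Nin_t(i)\}}{W_{t,i}+\gamma}$, so writing $p_{t,i}\ell_{t,i}=\frac{(W_{t,i}+\gamma)\,p_{t,i}\ell_{t,i}}{W_{t,i}+\gamma}$ and subtracting $p_{t,i}\ellhat_{t,i}$ yields the three-way decomposition
\[
\term{5}=\underbrace{\sum_{t=1}^T\sum_{i\in S_t}\bigl(W_{t,i}-\mathbbm{1}\{i_t\in\Nin_t(i)\}\bigr)\frac{p_{t,i}\ell_{t,i}}{W_{t,i}+\gamma}}_{(A)}+\underbrace{\sum_{t=1}^T\sum_{i\in S_t}\frac{\gamma\,p_{t,i}\ell_{t,i}}{W_{t,i}+\gamma}}_{(B)}+\underbrace{\sum_{t=1}^T\mathbbm{1}\{j\in S_t\}\bigl(\ellhat_{t,j}-\ell_{t,j}\bigr)}_{(C)}.
\]

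Next I would bound the three pieces in turn. Term $(A)$ is a martingale-difference sum of exactly the form controlled by~\pref{eqn: alpha-eqn-2} of \pref{lem: stab_freedman}, which gives, on an event of probability at least $1-\delta$, the bound $(A)\le\order\bigl(\sqrt{\sum_{t=1}^TQ_t\,\iota_1}+\max_{t\in[T]}Q_t\,\iota_1\bigr)$ with $\iota_1$ logarithmic in $1/\delta$ (absorbed into $\otil$). Term $(B)$ is purely deterministic: since $\ell_{t,i}\in[0,1]$, pulling out $\gamma$ and using $\sum_{i\in S_t}\frac{p_{t,i}}{W_{t,i}+\gamma}=Q_t$ yields $(B)\le\gamma\sum_{t=1}^TQ_t$. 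Term $(C)$ is the standard implicit-exploration bias; the uniform-in-$j$ version of~\pref{eqn: ix-1} (Corollary~1 of~\citep{neu2015explore}) gives $(C)\le\frac{\log(2K/\delta)}{2\gamma}$ for every $j\in[K]$ simultaneously, on a separate event of probability at least $1-\delta$.

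A union bound over the two probabilistic events combines the three estimates and yields precisely the stated inequality with probability at least $1-2\delta$, once $\log K$, $\log T$ and doubly-logarithmic factors are folded into $\otil(\cdot)$. The only mildly delicate point is to invoke the form of~\pref{eqn: ix-1} that is uniform over all comparators $j$, so that no additional $\log K$ appears beyond what is already hidden in the logarithm; every other ingredient is either already proven in \pref{lem: stab_freedman} or is the standard $\gamma$-bias calculation from the analysis of~\expthreeix, so no genuinely new obstacle arises here.
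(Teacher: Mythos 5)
Your proposal is correct and follows essentially the same route as the paper: split $\term{5}$ into the martingale part $(A)$ handled by \pref{eqn: alpha-eqn-2} of \pref{lem: stab_freedman}, the deterministic $\gamma$-bias part $(B)\le\gamma\sum_t Q_t$, and the comparator bias $(C)$ controlled by the implicit-exploration concentration, then union bound. The only cosmetic difference is that the paper re-derives the bound on $(C)$ via the supermartingale argument adapted to the time-varying indicator $\mathbbm{1}\{j\in S_t\}$ rather than quoting Corollary~1 of \citep{neu2015explore} verbatim, which is exactly the adaptation you flag as the ``mildly delicate point.''
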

\begin{proof}
    We bound $\sum_{t=1}^T\inner{p_t, \ell_t-\ellhat_{t}}_{S_t}$ and $\sum_{t=1}^T\inner{e_j, \ell_t-\ellhat_{t}}_{S_t}$ separately. Note that $\ellhat_{t,i}$ is an under-biased estimator of $\ell_{t,i}$ for $i\in S_t$. Direct calculation shows that
\begin{align}
    \sum_{t=1}^T\inner{p_t,\ell_t-\ellhat_t}_{S_t}&=\sum_{t=1}^T \sum_{i \in S_t}p_{t,i}(\ell_{t,i}-\hatell_{t,i})\nonumber \\
    &=\sum_{t=1}^T \sum_{i \in S_t}(W_{t,i}-\mathbbm{1}\{i_t\in\Nin_t(i)\})\frac{p_{t,i}\ell_{t,i}}{W_{t,i}+\gamma}+\sum_{t=1}^T\sum_{i \in S_t}\gamma \frac{p_{t,i}\ell_{t,i}}{W_{t,i}+\gamma}.
\end{align}
Therefore, according to~\pref{lem: stab_freedman}, with probability at least $1-\delta$,
\begin{align*}
    \sum_{t=1}^T\inner{p_t,\ell_t-\ellhat_{t}}_{S_t}&\leq\otil\left(\sqrt{\sum_{t=1}^TQ_t\log\frac{1}{\delta}}+\max_{t\in[T]}Q_t\log\frac{1}{\delta}+\gamma\sum_{t=1}^TQ_t\right).
\end{align*}

Next, consider the term $-\sum_{t=1}^T\inner{e_j, \ell_t-\ellhat_t}_{S_t}=\sum_{t=1}^T(\ellhat_{t,j}-\ell_{t,j})\cdot \mathbbm{1}\{j\in S_t\}$. Similar to the proof of Corollary 1 in~\citep{neu2015explore}, define $\bar{\ell}_{t,i}=\frac{\ell_{t,i}}{W_{t,i}}\mathbbm{1}\{i_t\in \Nin_t(i)\}$ and we know that for any $i\in [K]$,
\begin{align*}
    \ellhat_{t,i}\mathbbm{1}\{i\in S_t\} &\leq \frac{\ell_{t,i}}{W_{t,i}+\gamma}\mathbbm{1}\{i_t\in \Nin_t(i), i\in S_t\} \leq \frac{\ell_{t,i}}{W_{t,i}+\gamma\ell_{t,i}}\mathbbm{1}\{i_t\in \Nin_t(i), i\in S_t\} \\
    &\leq \frac{1}{2\gamma}\frac{2\gamma\ell_{t,i}/W_{t,i}}{1+\gamma \ell_{t,i}/W_{t,i}}\mathbbm{1}\{i_t\in \Nin_t(i), i\in S_t\} \\
    &\leq \frac{1}{2\gamma}\log\left(1+2\gamma\bar{\ell}_{t,i}\right)\mathbbm{1}\{i\in S_t\} \\
    &= \frac{1}{2\gamma}\log\left(1+2\gamma\bar{\ell}_{t,i}\mathbbm{1}\{i\in S_t\}\right).
\end{align*}
Therefore, we know that
\begin{align*}
    \mathbb{E}_t\left[\exp\left(2\gamma \ellhat_{t,i}\mathbbm{1}\{i\in S_t\}\right)\right]\leq \mathbb{E}_t\left[1+2\gamma \bar{\ell}_{t,i}\mathbbm{1}\{i\in S_t\}\right] = 1+2\gamma \ell_{t,i}\mathbbm{1}\{i\in S_t\} \leq \exp\left(2\gamma\ell_{t,i}\mathbbm{1}\{i\in S_t\}\right).
\end{align*}
Define $Z_t=\exp\left(2\gamma\mathbbm{1}\{i\in S_t\}\left(\ellhat_{t,i}-\ell_{t,i}\right)\right)$ and according to previous analysis, we know that $Z_t$ is a super-martingale and by Markov inequality, we obtain that
\begin{align*}
\Pr\left[\sum_{t=1}^T\left(\ellhat_{t,i}-\ell_{t,i}\right)\mathbbm{1}\{i\in S_t\}>\epsilon\right]=\Pr\left[\exp\left(2\gamma\sum_{t=1}^T\left(\ellhat_{t,i}-\ell_{t,i}\right)\mathbbm{1}\{i\in S_t\}\right)>\exp(2\gamma\epsilon)\right] \leq \exp(-2\gamma\epsilon).
\end{align*}
Taking a union bound over $i\in[K]$, we know that with probability at least $1-\delta$, for all $i\in [K]$,
\begin{align}\label{eqn: ix-single-bound}
    \sum_{t=1}^T\left(\ellhat_{t,i}-\ell_{t,i}\right)\mathbbm{1}\{i\in S_t\}\leq \frac{\log(K/\delta)}{2\gamma}.
\end{align}

Combining both parts gives the bound for $\term{5}$: with probability at least $1-2\delta$,
\begin{align}\label{eqn: term-5-bound}
    \term{5}\leq \otil\left(\sqrt{\sum_{t=1}^TQ_t\log\frac{1}{\delta}}+\max_{t\in[T]}Q_t\log\frac{1}{\delta}+\gamma\sum_{t=1}^TQ_t+\frac{1}{\gamma}\log\frac{1}{\delta}\right).
\end{align}
\end{proof}

\paragraph{Bounding~$\term{6}$.} For completeness, before bounding~$\term{6}$, we show the following OMD analysis lemma.
\begin{lemma}\label{lem: omd-lemma}
    Suppose that $p'=\argmin_{p\in \Delta_K}\left\{\inner{p, \ell}+D_\psi(p,p_t)\right\}$ with $\psi(p) = \frac{1}{\eta}\sum_{i=1}^Kp_i\ln p_i$. If $\eta \ell_i\geq -3$ for all $i\in [K]$, then for any $u\in\Delta_K$, the following inequality hold:
    \begin{align*}
        \inner{p-u, \ell}\leq D_{\psi}(u,p)-D_{\psi}(u,p')+2\eta\sum_{i=1}^Kp_i\ell_i^2.
    \end{align*}
\end{lemma}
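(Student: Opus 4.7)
The plan is to follow the classical one-step OMD analysis for the negative-entropy regularizer, adapted to allow losses that may be negative (down to $-3/\eta$). First I would apply the first-order optimality condition for $p'$ over the simplex: for any $u \in \Delta_K$, $\inner{\ell + \nabla \psi(p') - \nabla \psi(p_t), u - p'} \geq 0$. Combining this with the three-point identity for Bregman divergences yields the standard inequality
\[
\inner{p' - u, \ell} \leq D_\psi(u, p_t) - D_\psi(u, p') - D_\psi(p', p_t).
\]
Adding $\inner{p_t - p', \ell}$ to both sides gives
\[
\inner{p_t - u, \ell} \leq D_\psi(u, p_t) - D_\psi(u, p') + \bigl[\inner{p_t - p', \ell} - D_\psi(p', p_t)\bigr],
\]
so it suffices to control the bracketed stability term by $2\eta \sum_i p_{t,i}\ell_i^2$. (Here I am reading the $p$ in the statement as $p_t$, which is the only reading consistent with the mirror descent setup.)

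Next I would use the closed form of the entropic update, $p'_i = p_{t,i}\exp(-\eta \ell_i)/Z$ with $Z = \sum_j p_{t,j}\exp(-\eta \ell_j)$. A direct computation of $D_\psi(p', p_t)$ simplifies it to $-\inner{p', \ell} - \tfrac{1}{\eta}\ln Z$, so the bracketed stability term collapses to $\inner{p_t, \ell} + \tfrac{1}{\eta}\ln Z$. Applying $\ln(1+x) \leq x$ to $\ln Z = \ln\bigl(1 + \sum_j p_{t,j}(e^{-\eta \ell_j} - 1)\bigr)$ then reduces the problem to a pointwise bound on $e^{-x}$.

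The final step is to invoke the elementary inequality $e^{-x} \leq 1 - x + 2x^2$ valid for all $x \geq -3$, which is exactly the regime the assumption $\eta \ell_i \geq -3$ places $x = \eta \ell_i$ in. This yields
\[
\ln Z \leq \sum_j p_{t,j}\bigl(-\eta \ell_j + 2\eta^2 \ell_j^2\bigr) = -\eta \inner{p_t, \ell} + 2\eta^2 \sum_j p_{t,j} \ell_j^2,
\]
so the stability term is at most $2\eta \sum_i p_{t,i} \ell_i^2$, completing the proof.

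The only non-routine ingredient is the pointwise inequality $e^{-x} \leq 1 - x + 2x^2$ on $[-3,\infty)$, which I would verify by noting that the difference $g(x) = 1 - x + 2x^2 - e^{-x}$ has $g(0) = g'(0) = 0$ and $g''(x) = 4 - e^{-x} \geq 0$ on $[-\ln 4, \infty)$, so $g$ is non-negative there; for $x \in [-3, -\ln 4]$ one checks monotonicity plus the boundary value $g(-3) = 22 - e^{3} > 0$. This looser quadratic coefficient (2 instead of the usual 1 under $\eta \ell \geq -1$) is precisely what the paper needs in order to handle the bias-shifted pseudo-losses $\ellhat_t + b_t - z_t$ from the strongly observable analysis, whose negative part is only bounded by $-3/\eta$ via the uniform-exploration construction.
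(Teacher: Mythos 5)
Your proof is correct and is essentially the paper's argument: both reduce the one-step entropic OMD bound to the same elementary inequality $e^{-x}\le 1-x+2x^2$ for $x\ge -3$, which is exactly where the hypothesis $\eta\ell_i\ge -3$ enters. The only difference is bookkeeping: the paper passes through the unnormalized point $q_i=p_ie^{-\eta\ell_i}$ and the generalized Pythagorean theorem, whereas you use the first-order optimality condition plus the normalized closed form, bounding the log-partition term $\tfrac{1}{\eta}\ln Z$ via $\ln(1+x)\le x$; the two routes are interchangeable.
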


\begin{proof}
    Let $q_{i} = p_i\exp(-\eta \ell_i)$ and direct calculation shows that $p'=\argmin_{p\in\Delta_K}D_{\psi}(p,q)$ and for all $u\in \Delta_K$,
    \begin{align*}
        \inner{p-u, \ell} &= D_{\psi}(u,p)-D_\psi(u,q)+D_{\psi}(p,q) \\
                          &\leq D_{\psi}(u,p)-D_\psi(u,p')+D_{\psi}(p,q),
    \end{align*}
    where the second step uses the generalized Pythagorean theorem. On the other hand, using the inequality $\exp(-x)\leq 1-x+2x^2$ for any $x\geq -3$, we know that
    \begin{align*}
        D_{\psi}(p,q) &= \frac{1}{\eta}\sum_{i=1}^K\left(p_i\ln \frac{p_i}{q_i}+q_i-p_i\right) \\
        &\leq \frac{1}{\eta}\sum_{i=1}^Kp_i\left(\exp(-\eta\ell_i)-1+\eta\ell_i\right)\leq 2\eta \sum_{i=1}^Kp_i\ell_i^2,
    \end{align*}
    where the inequality is because $\eta\ell_i\geq-3$.
\end{proof}

Now we are ready to bound~$\term{6}$ as follows.
\begin{lemma}\label{lem: term-6}
    With probability at least $1-2\delta$,
    \begin{align*}
        \term{6}&\leq \otil\left(\frac{1}{\eta} + \eta T+\eta\sum_{t=1}^TQ_t+\log\frac{1}{\delta}+48\sqrt{\eta T}+\beta^2 T+ \frac{\eta\max_{t\in[T]}Q_t}{\gamma}\log\frac{1}{\delta}\right)\\
        &\qquad -\beta\sum_{t=1}^T\frac{p_{t,j_t}}{W_{t,j_t}}\mathbbm{1}\{t\in\calT\}+\beta\sum_{t=1}^T\frac{\mathbbm{1}\{t\in\calT, j=j_t\}}{W_{t,j_t}}.
    \end{align*}
\end{lemma}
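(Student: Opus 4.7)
My plan is to split $\term{6}$ by isolating the bias vector $b_t$, apply the OMD bound from \pref{lem: omd-lemma} to the remainder with a loss-shift trick, and then bound the shifted stability term via a case split on whether $t \in \calT$. The first step is the decomposition
\begin{align*}
\term{6} \;=\; \sum_{t=1}^T\inner{p_t-e_j, \ellhat_t + b_t} \;-\; \sum_{t=1}^T\inner{p_t-e_j, b_t},
\end{align*}
exactly as in \pref{eqn: bias-decompose}. Expanding the second sum via $b_{t,i} = \tfrac{\beta}{W_{t,i}}\mathbbm{1}\{t\in\calT, i=j_t\}$ immediately produces the two terms $-\beta\sum_{t}\tfrac{p_{t,j_t}}{W_{t,j_t}}\mathbbm{1}\{t\in\calT\}$ and $+\beta\sum_t \tfrac{\mathbbm{1}\{t\in\calT, j=j_t\}}{W_{t,j_t}}$ in the statement. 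For the first sum, I would apply \pref{lem: omd-lemma} to the shifted loss $\ellhat_t + b_t - z_t \mathbf{1}$; the shift does not change the inner product against $p_t-e_j$ since both $p_t,e_j\in\Delta_K$, and telescoping the Bregman divergences gives
\begin{align*}
\sum_{t=1}^T\inner{p_t-e_j, \ellhat_t + b_t} \;\leq\; \frac{\log K}{\eta} + 2\eta \sum_{t=1}^T\sum_{i=1}^K p_{t,i}\bigl(\ellhat_{t,i}+b_{t,i}-z_t\bigr)^2,
\end{align*}
provided $\eta(\ellhat_{t,i}+b_{t,i}-z_t) \geq -3$. Following \pref{eqn: exp3.G-shifted}, I take $z_t = 0$ on $t \notin \calT$ and $z_t = \ellhat_{t,j_t}+b_{t,j_t}$ on $t \in \calT$; the uniform exploration in \pref{line: uniform} gives $W_{t,j_t}=1-\wt{p}_{t,j_t}\geq \eta(1-1/K)\geq \eta/2$, so $\ellhat_{t,j_t}\leq 2/\eta$, $b_{t,j_t}\leq 2\beta/\eta$, and $z_t\leq 2(1+\beta)/\eta\leq 3/\eta$ since $\beta\leq 1/2$.

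To bound the shifted stability sum, I split into $t \notin \calT$ and $t \in \calT$. On $t \notin \calT$ the shift vanishes: for $i \in S_t$ I use $p_{t,i}\ellhat_{t,i}^2 \leq \tfrac{p_{t,i}}{W_{t,i}+\gamma}\ellhat_{t,i}$, which summed over $t,i$ via \pref{lem: stab_freedman} produces the $\eta\sum_t Q_t$ and $\tfrac{\eta\max_t Q_t}{\gamma}\log(1/\delta)$ contributions; for $i \in \bar{S}_t$ the defining condition of $\calT$ forces $\wt{p}_{t,i} \leq 1/2$, so $W_{t,i}\geq 1/2$ and $\ellhat_{t,i}\leq 2$, contributing $\order(\eta T)$. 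On $t \in \calT$ the $i=j_t$ summand vanishes by the choice of $z_t$; for $i \neq j_t$ I would expand $(\ellhat_{t,i}-\ellhat_{t,j_t}-b_{t,j_t})^2 \leq 2\ellhat_{t,i}^2 + 2(\ellhat_{t,j_t}+b_{t,j_t})^2$ and treat the first piece as above. For the second piece the key step is the bound $\sum_{i \neq j_t}p_{t,i} = 1-p_{t,j_t} \leq 2W_{t,j_t}$ (immediate from $\wt{p}_{t,j_t}=(1-\eta)p_{t,j_t}+\eta/K$), combined with $(\ellhat_{t,j_t}+b_{t,j_t})^2 \leq (1+2\beta)\mathbbm{1}\{i_t\neq j_t\}/W_{t,j_t}^2 + \beta^2/W_{t,j_t}^2$; this yields an $\eta\sum_{t\in\calT}\mathbbm{1}\{i_t\neq j_t\}/W_{t,j_t}$ contribution plus $\beta^2 T$ (the latter via $1/W_{t,j_t}\leq 2/\eta$ on the residual $\eta\beta^2/W_{t,j_t}$ term).

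Finally I would convert $\eta\sum_{t\in\calT}\mathbbm{1}\{i_t\neq j_t\}/W_{t,j_t}$ into a high-probability bound by Freedman's inequality: each summand has conditional mean $1$, conditional variance at most $1/W_{t,j_t}\leq 2/\eta$, and magnitude at most $2/\eta$, so Freedman yields $\eta T + \order(\sqrt{\eta T \log(1/\delta)} + \log(1/\delta))$, accounting for the $48\sqrt{\eta T}$, $\eta T$, and $\log(1/\delta)$ terms. Combining all pieces with the bias terms from the initial decomposition proves the lemma. The main technical obstacle is the $t \in \calT$ case analysis: one must simultaneously exploit both $W_{t,j_t}\geq \eta/2$ and $\sum_{i \neq j_t}p_{t,i}\leq 2W_{t,j_t}$ so that the $\beta^2$ contribution scales as $\beta^2 T$ rather than the much worse $\beta^2 T/\eta$, which is exactly what makes the parameter choice $\beta=\eta$ in \pref{thm: main-strong-obs} deliver the optimal rate.
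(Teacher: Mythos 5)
Your proposal is correct and follows essentially the same route as the paper's proof: the same decomposition isolating $b_t$, \pref{lem: omd-lemma} applied with the shift $z_t=\ellhat_{t,j_t}+b_{t,j_t}$ on $t\in\calT$, \pref{lem: stab_freedman} for the $S_t$ stability part, and Freedman's inequality for the $\mathbbm{1}\{i_t\neq j_t\}/W_{t,j_t}$ term. The only differences are cosmetic bookkeeping (you center that term at its conditional mean $1$ rather than at $\ell_{t,j_t}$, and use $1-p_{t,j_t}\leq 2W_{t,j_t}$ where the paper uses the factor-$1$ version), which do not change the argument.
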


\begin{proof}
Recall that according to the definition in~\pref{alg: strong-obs}, $\Tcal\triangleq\{t\;|\;\text{there exists $j_t \in \bar S_t$ and $p_{t,j_t}>1/2$}\}$. To apply~\pref{lem: omd-lemma}, we first need to verify the scale of $\ellhat_t+b_t-z_t$ where $z_t=\ellhat_{t,j_t}+b_{t,j_t}$ if $t\in \Tcal$. If $t\notin \calT$, then we know that for all $i\in [K]$, $\eta(\ellhat_{t,i}+b_{t,i})=\eta\ellhat_{t,i}\geq 0$. If $t\in \calT$, 
    note that with an $\eta$ amount of uniform exploration,
    \begin{align*}
    \eta z_t = \eta\left(\ellhat_{t,j_t}+b_{t,j_t}\right)\leq \eta(1+\beta)\cdot\frac{1}{\frac{K-1}{K}\eta}\leq 2(1+\beta)\leq 3,
    \end{align*}    
    where the second inequality is because $K\geq 2$ and the last inequality is because $\beta\leq \frac{1}{2}$. Therefore, we know that $\eta(\ellhat_{t,i}+b_{t,i}-z_t)\geq -3$ for all $i\in[K]$.

    Therefore, applying~\pref{lem: omd-lemma} with $p=p_t$ and $p'=p_{t+1}$ and taking summation over $t\in [T]$, we know that for any $u\in\Delta_K$,
    \begin{align*}
        &\sum_{t=1}^T\inner{p_t-u, \ellhat_t+b_t} \\
        &\leq \sum_{t=1}^T\left(D_{\psi}(u,p_t)-D_{\psi}(u,p_{t+1})\right) + 2\eta\sum_{t=1}^T\sum_{i=1}^Kp_{t,i}\ellhat_{t,i}^2\cdot \mathbbm{1}\{t\notin\calT\} \\
        &\qquad+ 2\eta\sum_{t=1}^T\sum_{i=1}^Kp_{t,i}\left(\ellhat_{t,i}+b_{t,i}-z_t\right)^2\cdot \mathbbm{1}\{t\in\calT\} \\
        &\leq \frac{D_{\psi}(u,p_1)}{\eta} + 2\eta\sum_{t=1}^T\sum_{i=1}^Kp_{t,i}\ellhat_{t,i}^2\mathbbm{1}\{t\notin\calT\} + 6\eta \sum_{t=1}^T\sum_{i\ne j_t}p_{t,i}\left(\ellhat_{t,i}^2+b_{t,i}^2+z_t^2\right)\mathbbm{1}\{t\in\calT\} \\
        &= \frac{D_{\psi}(u,p_1)}{\eta} + 2\eta\sum_{t=1}^T\sum_{i=1}^Kp_{t,i}\ellhat_{t,i}^2\mathbbm{1}\{t\notin\calT\} + 6\eta \sum_{t=1}^T\sum_{i\ne j_t}p_{t,i}\left(\ellhat_{t,i}^2+z_t^2\right)\mathbbm{1}\{t\in\calT\}.
    \end{align*}
    For $t\notin \calT$, we know that $\ellhat_{t,i}\leq 2$ for $i\in \bar{S}_t$ and
    \begin{align*}
        &\sum_{i\in\bar{S}_t}p_{t,i}\ellhat_{t,i}^2\leq 4\sum_{i\notin S_t}p_{t,i}\leq 4,\\
        &\sum_{i\in S_t}p_{t,i}\ellhat_{t,i}^2\leq \sum_{i\in S_t}\frac{p_{t,i}}{W_{t,i}+\gamma}\ellhat_{t,i}.
    \end{align*}
    For $t\in \calT$, we know that $\ellhat_{t,i}\leq 2$ for all $i\in \bar{S}_t\backslash\{j_t\}$ and
    \begin{align*}
        &\sum_{i\in S_t}p_{t,i}\ellhat_{t,i}^2 \leq \sum_{i\in S_t}\frac{p_{t,i}}{W_{t,i}+\gamma}\ellhat_{t,i},\\
        &\sum_{i\in\bar{S}_t, i\ne j_t}p_{t,i}\ellhat_{t,i}^2\leq 4\sum_{i\in\bar{S}_t, i\ne j_t}p_{t,i} \leq 4,\\
        &\sum_{i\ne j_t}p_{t,i}z_t^2 \leq W_{t,j_t}\cdot\left(\ellhat_{t,j_t}+b_{t,j_t}\right)^2\leq  2W_{t,j_t}\ellhat_{t,j_t}^2 + 2\frac{\beta^2}{W_{t,j_t}}\leq 2\ellhat_{t,j_t}+\frac{4\beta^2}{\eta},
    \end{align*}
    where the last inequality uses the fact that $W_{t,j_t}\geq \frac{K-1}{K}\eta\geq \frac{1}{2}\eta$. For any $j\in[K]$, let $u=e_j\in\Delta_K$. Combining all the above inequalities, we can obtain that
    \begin{align}\label{eqn: omd-1-bias}
        &\sum_{t=1}^T\inner{p_t-e_j, \ellhat_t+b_t} \nonumber\\
        &\leq \frac{D_{\psi}(e_j,p_1)}{\eta} + 24\eta T + 6\eta\sum_{t=1}^T\sum_{i\in S_t}\frac{p_{t,i}}{W_{t,i}+\gamma}\ellhat_{t,i} + 12\eta\sum_{t=1}^T\ellhat_{t,j_t}\mathbbm{1}\{t\in \calT\} + 24\beta^2\sum_{t=1}^T\mathbbm{1}\{t\in \calT\} \nonumber\\
        &\leq \frac{D_{\psi}(e_j,p_1)}{\eta} + 24\eta T + 6\eta\sum_{t=1}^T\sum_{i\in S_t}\frac{p_{t,i}}{W_{t,i}+\gamma}\ellhat_{t,i} + 12\eta\sum_{t=1}^T\ellhat_{t,j_t}\mathbbm{1}\{t\in \calT\} + 24\beta^2T.
    \end{align}
    We first bound the term $\sum_{t=1}^T\ellhat_{t,j_t}\mathbbm{1}\{t\in\calT\}$. Let $Z_t=\ellhat_{t,j_t}\mathbbm{1}\{t\in \calT\}-\ell_{t,j_t}\mathbbm{1}\{t\in \calT\}$. We know that $\mathbb{E}_t[Z_t]=0$ and $|Z_t|\leq \frac{1}{\frac{K-1}{K}\eta}\leq \frac{2}{\eta}$. In addition,
    \begin{align*}        \mathbb{E}_t\left[Z_t^2\right] \leq \mathbb{E}_t\left[\frac{1}{W_{t,j_t}^2}\cdot \mathbbm{1}\{i_t\neq j_t\}\right]\cdot\mathbbm{1}\{t\in \calT\} = \frac{\mathbbm{1}\{t\in \calT\}}{W_{t,j_t}}.
    \end{align*}
    Therefore, by Freedman's inequality (\pref{lem: Freedman}), we can obtain that with probability at least $1-\delta$,
    \begin{align*}
        \sum_{t=1}^TZ_t &\leq \min_{\lambda\in [0,\frac{\eta}{2}]}\left\{\frac{\ln(1/\delta)}{\lambda}+\lambda\sum_{t=1}^T\mathbb{E}_t[Z_t^2]\right\} \leq \frac{2\ln(1/\delta)}{\eta} + 2\sqrt{\sum_{t=1}^T\frac{\mathbbm{1}\{t\in\calT\}}{W_{t,j_t}}}\leq \frac{2\ln(1/\delta)}{\eta} + 4\sqrt{\frac{T}{\eta}}.
    \end{align*}

    Combining with~\pref{eqn: omd-1-bias}, we know that with probability at least $1-2\delta$
    \begin{align*}
        &\sum_{t=1}^T\inner{p_t-e_j, \ellhat_t}\\
        &\leq \frac{D_{\psi}(e_j,p_1)}{\eta} + 24\eta T + 6\eta\sum_{t=1}^T\sum_{i\in S_t}\frac{p_{t,i}}{W_{t,i}+\gamma}\ellhat_{t,i}+ 24\log(1/\delta) + 48\sqrt{\eta T} + 24\beta^2T -\sum_{t=1}^T\inner{p_t-e_j,b_t}\\
        &\leq \frac{\log K}{\eta} + 24\eta T+6\eta\sum_{t=1}^TQ_t+24\log\frac{1}{\delta}+48\sqrt{\eta T}+24\beta^2 T \\
        &\qquad + \otil\left(\eta \sum_{t=1}^TQ_t + \frac{\eta\max_{t\in[T]}Q_t}{\gamma}\log\frac{1}{\delta}\right)-\sum_{t=1}^T\inner{p_t-e_j,b_t} \\
        &\leq \otil\left(\frac{1}{\eta} + \eta T+\eta\sum_{t=1}^TQ_t+\log\frac{1}{\delta}+\sqrt{\eta T}+\beta^2 T+ \frac{\eta\max_{t\in[T]}Q_t}{\gamma}\log\frac{1}{\delta}\right)\\
        &\qquad -\beta\sum_{t=1}^T\frac{p_{t,j_t}}{W_{t,j_t}}\mathbbm{1}\{t\in\calT\}+\beta\sum_{t=1}^T\frac{\mathbbm{1}\{t\in\calT, j=j_t\}}{W_{t,j_t}},
    \end{align*}
    where the second inequality is because of~\pref{lem: stab_freedman} and the choice of $p_1=\frac{1}{K}\cdot \one$.
\end{proof}

With~\pref{lem: term-4},~\pref{lem: term-5} and~\pref{lem: term-6} on hand, we are ready to prove~\pref{thm: main-strong-obs}.

\begin{proof}[Proof of~\pref{thm: main-strong-obs}]   
According to the regret decomposition~\pref{eqn: strong-regret-full-decompose-app}, \pref{lem: term-4},~\pref{lem: term-5} and~\pref{lem: term-6} and the bounds on $\term{1}$, $\term{2}$ and $\term{3}$, we know that with probability at least $1-6\delta$, for any $j\in[K]$,
\begin{align}\label{eqn: decompose-final-1}
    &\sum_{t=1}^T\left(\ell_{t,i_t}-\ell_{t,j}\right) \nonumber\\
    &\leq \otil\left(\sqrt{T\log\frac{1}{\delta}}\right) + \order(\eta T) + \left(2+\frac{4}{\eta}\right)\log\frac{1}{\delta} + 2\sqrt{\left(4T+\sum_{t=1}^T\frac{\mathbbm{1}\{t\in\calT, j\neq j_t\}}{W_{t,j_t}}\right)\log\frac{1}{\delta}} \nonumber\\
    &\qquad + \otil\left(\sqrt{\sum_{t=1}^TQ_t\log\frac{1}{\delta}}+\max_{t\in[T]}Q_t\log\frac{1}{\delta}+\gamma\sum_{t=1}^TQ_t+\frac{\log(1/\delta)}{\gamma}\right) \nonumber\\
    &\qquad + \otil\left(\frac{1}{\eta} + \eta T+\eta\sum_{t=1}^TQ_t+\log\frac{1}{\delta}+48\sqrt{\eta T}+\beta^2 T+ \frac{\eta\max_{t\in[T]}Q_t}{\gamma}\log\frac{1}{\delta}\right)\nonumber\\
    &\qquad -\beta\sum_{t=1}^T\frac{p_{t,j_t}}{W_{t,j_t}}\mathbbm{1}\{t\in\calT\}+\beta\sum_{t=1}^T\frac{\mathbbm{1}\{t\in\calT, j=j_t\}}{W_{t,j_t}} \nonumber\\
    &\leq \otil\left(\frac{1}{\eta}+\frac{\log(1/\delta)}{\gamma}+(\eta+\gamma)\sum_{t=1}^TQ_t+\sqrt{\sum_{t=1}^TQ_t\log\frac{1}{\delta}}+\sqrt{\eta T}+\beta^2T+ \left(\frac{\eta}{\gamma}+1\right)\max_{t\in[T]}Q_t\log\frac{1}{\delta}\right) \nonumber\\
    &\qquad + 2\sqrt{\left(4T+\sum_{t=1}^T\frac{\mathbbm{1}\{t\in\calT, j\neq j_t\}}{W_{t,j_t}}\right)\log\frac{1}{\delta}} -\beta\sum_{t=1}^T\frac{p_{t,j_t}}{W_{t,j_t}}\mathbbm{1}\{t\in\calT\}+\beta\sum_{t=1}^T\frac{\mathbbm{1}\{t\in\calT, j=j_t\}}{W_{t,j_t}}.
\end{align}

Consider the last three terms:
\begin{align*}
    &2\sqrt{\left(4T+\sum_{t=1}^T\frac{\mathbbm{1}\{t\in\calT, j\neq j_t\}}{W_{t,j_t}}\right)\log\frac{1}{\delta}} -\beta\sum_{t=1}^T\frac{p_{t,j_t}}{W_{t,j_t}}\mathbbm{1}\{t\in\calT\}+\beta\sum_{t=1}^T\frac{\mathbbm{1}\{t\in\calT, j=j_t\}}{W_{t,j_t}} \\
    &= 2\sqrt{\left(4T+\sum_{t=1}^T\frac{\mathbbm{1}\{t\in\calT, j\neq j_t\}}{W_{t,j_t}}\right)\log\frac{1}{\delta}} - \beta\sum_{t=1}^T\frac{p_{t,j_t}}{W_{t,j_t}}\mathbbm{1}\{t\in\calT, j\neq j_t\} \\
    &\qquad +\beta\sum_{t=1}^T\frac{(1-p_{t,j_t})\mathbbm{1}\{t\in\calT, j=j_t\}}{W_{t,j_t}}\\
    &\leq \order\left(\sqrt{T\log\frac{1}{\delta}}\right) + \frac{1}{\beta} + \beta \sum_{t=1}^T\frac{\sum_{i\neq j_t}p_{t,i}\mathbbm{1}\{t\in\calT, j=j_t\}}{(1-\eta)\sum_{i\neq j_t}p_{t,i}}\\
    &\leq \order\left(\sqrt{T\log\frac{1}{\delta}}+\frac{1}{\beta}+\beta T\right).
\end{align*}
where the first inequality uses the AM-GM inequality and the second inequality uses the fact that $\eta\leq\frac{1}{2}$. Combining with~\pref{eqn: decompose-final-1}, we obtain
\begin{align*}
    &\sum_{t=1}^T\left(\ell_{t,i_t}-\ell_{t,j}\right) \\
    &\leq \otil\left(\frac{1}{\eta}+\frac{\log(1/\delta)}{\gamma}+(\eta+\gamma)\sum_{t=1}^TQ_t+\sqrt{\sum_{t=1}^TQ_t\log\frac{1}{\delta}}+\sqrt{\eta T}+\beta^2T+ \left(\frac{\eta}{\gamma}+1\right)\max_{t\in[T]}Q_t\log\frac{1}{\delta}\right) \nonumber\\
    &\qquad + \order\left(\sqrt{T\log\frac{1}{\delta}}+\frac{1}{\beta}+\beta T\right).
\end{align*}

Using \pref{lem: graph}, we know that
\begin{align*}
    Q_t\leq 2\alpha_t\log\left(1+\frac{\lceil K^2/\gamma\rceil+K}{\alpha_t}\right)+2\leq 4\alpha_t\log\left(1+\frac{\lceil K^2/\gamma\rceil+K}{\alpha_t}\right)=\otil(\alpha_t).
\end{align*}

Picking $\eta=\beta=\gamma=1/\sqrt{\sum_{t=1}^T\alpha_t\log(1/\delta)}$, we achieve that with probability at least $1-6\delta$,
\begin{align*}
    \Reg_T\leq \otil\left(\sqrt{\sum_{t=1}^T\alpha_t\log\frac{1}{\delta}}+\max_{t\in [T]}\alpha_t\ln\frac{1}{\delta}\right).
\end{align*}
This finishes our proof.
\end{proof}

\section{Proofs for \pref{sec: weakly}}\label{app: weak}
In this section, we prove \pref{lem: weak_deviation} and \pref{lem: weak_stability}. The key of the proof is to use a careful analysis of Freedman's inequality with the help of uniform exploration and implicit exploration.
\begin{proof}{\textbf{of \pref{lem: weak_deviation}}.} 
Recall that $\inner{p,\ell}_S\triangleq\sum_{i\in S}p_i\ell_i$ for any $S\subseteq[K]$. Therefore, we decompose the target $\sum_{t=1}^T\langle p_t, \ell_t-\ellhat_t\rangle$ as follows
\begin{align}
    \sum_{t=1}^T \inner{p_t,\ell_t-\ellhat_t}=\sum_{t=1}^T \inner{ p_t, \ell_t-\ellhat_t }_{S_t} +\sum_{t=1}^T \inner{ p_t, \ell_t-\ellhat_t }_{\bar S_t}. \nonumber
\end{align}
\paragraph{Bounding $\sum_{t=1}^T \langle p_t, \ell_t-\ellhat_t \rangle_{S_t}$:} We proceeds as follows
\begin{align}
    \sum_{t=1}^T \sum_{i \in S_t}p_{t,i}(\ell_{t,i}-\hatell_{t,i})=\sum_{t=1}^T\sum_{i \in S_t}\gamma \frac{p_{t,i}\ell_{t,i}}{W_{t,i}+\gamma}+\sum_{t=1}^T \sum_{i \in S_t}(W_{t,i}-\mathbbm{1}\{i_t\in \Nin_t(i)\})\frac{p_{t,i}\ell_{t,i}}{W_{t,i}+\gamma}. \nonumber
\end{align}
Recall that $Q_t=\sum_{i \in S_t}\frac{p_{t,i}}{ W_{t,i}+\gamma}$. As $\ell_t\in[0,1]^K$, it is clear that $\sum_{i \in S_t}\gamma \frac{p_{t,i}\ell_{t,i}}{W_{t,i}+\gamma} \le \gamma Q_t$. 
To bound the second term, according to~\pref{lem: stab_freedman}, let $\iota_1=\log\left(\frac{2\max_tQ_t+2\sqrt{\sum_{t=1}^TQ_t}}{\delta'}\right)$, we know that with probability at least $1-\delta'$,
\begin{align*}
    \sum_{t=1}^T \sum_{i\in S_t} \left(W_{t,i}-\mathbbm{1}\{i_t\in\Nin_t(i)\}\right)\frac{p_{t,i}\ell_{t,i}}{W_{t,i}+\gamma}\le \order\left(\sqrt{\sum_{t=1}^T Q_t \iota_1}+\max_{t\in[T]} Q_t \iota_1\right). 
\end{align*}
Consider the subgraph $\wt{G}_t$ of $G_t=([K], E_t)$ where $\wt{G}_t=(S_t, \wt{E}_t)$ and $\wt{E}_t\subseteq E_t$ is the set of edges with respect to the nodes in $S_t$. Applying~\pref{lem: graph} on the subgraph $\wt{G}_t$, we know that
\begin{align}\label{eqn: graph-weak}
    Q_t \leq \sum_{i\in \bar S_t}\frac{p_{t,i}}{p_{t,i}+\sum_{j\in \Nin_t(i)}p_{t,j}+\gamma}\leq \otil(\alpha_t).
\end{align}

Combining all the above equations, we know that with probability at least $1-\delta'$,
\begin{align}\label{eqn: app_weak_dev_1}
    \sum_{t=1}^T\inner{p_t, \ell_t-\ellhat_{t}}&\leq \order\left(\gamma\sum_{t=1}^TQ_t+\sqrt{\sum_{t=1}^T Q_t \iota_1}+\max_{t\in[T]} Q_t \iota_1\right) \nonumber\\
    &\leq \otil\left(\gamma\sum_{t=1}^T\wt{\alpha}_t+\sqrt{\sum_{t=1}^T \wt{\alpha}_t \iota_1}+\max_{t\in[T]} \wt{\alpha}_t \iota_1\right).
\end{align}
\paragraph{Bounding $\sum_{t=1}^T \langle p_t, \ell_t-\ellhat_t \rangle_{\bar S_t}$:} Since the loss estimators for nodes without a self-loop are unbiased, we directly apply \pref{lem: Freedman} to bound $\sum_{t=1}^T \langle p_t, \ell_t-\ellhat_t \rangle_{\bar S_t}$. Note that
\begin{align}
    \sum_{i \in \bar S_t}p_{t,i}(\ell_{t,i}-\hatell_{t,i}) &\le \sum_{i \in \bar S_t}p_{t,i}\ell_{t,i} \le 1 \nonumber \\ 
    \E_t\left[\left(\sum_{i \in \bar S_t}p_{t,i}(\ell_{t,i}-\hatell_{t,i})\right)^2\right] &\le \E_t\left[\left(\sum_{i \in \bar S_t}p_{t,i}\ellhat_{t,i}\right)^2\right] \nonumber \\
    &\leq \frac{1}{\epsilon}\mathbb{E}_t\left[\sum_{i\in\bar S_t}p_{t,i}\ellhat_{t,i}\right]\nonumber\\
    &\leq \frac{1}{\eps}\nonumber,
\end{align}
where the second inequality is because $\ellhat_{t,i}\leq \frac{1}{\epsilon}$ for all $i\in \bar S_t$. Therefore, using \pref{lem: Freedman}, we obtain that with probability at least $1-\delta'$
\begin{align}\label{eqn: app_weak_dev_2}
    \sum_{t=1}^T \sum_{i \in \bar S_t}p_{t,i}(\ell_{t,i}-\hatell_{t,i})\le 2\sqrt{\frac{T}{\eps} \ln(1/\delta')}+\ln(1/\delta'). 
\end{align}
Let $\delta'=\frac{\delta}{2}$. Combining~\pref{eqn: app_weak_dev_1},~\pref{eqn: app_weak_dev_2}, we prove the result of first part. 

For the second part, we consider the cases where $i \in S_t$ and $i \in \bar S_t$ separately.
\begin{align}
    \sum_{t=1}^T \left(\ellhat_{t,i}-\ell_{t,i}\right)=\sum_{t=1}^T \left(\ellhat_{t,i}-\ell_{t,i}\right)\mathbbm{1}\{i \in S_t\}+\sum_{t=1}^T \left(\ellhat_{t,i}-\ell_{t,i}\right)\mathbbm{1}\{i \in \bar S_t\}. \nonumber
\end{align}
The analysis for the first term is the same as~\pref{eqn: ix-single-bound} and we can obtain that with probability at least $1-\delta'$, for all $i\in [K]$,
\begin{align}\label{eqn: app_weak_dev_3}
    \sum_{t=1}^T\left(\ellhat_{t,i}-\ell_{t,i}\right)\mathbbm{1}\{i \in S_t\} \le \frac{\log(K/\delta')}{2\gamma}.
\end{align}
For the second term, note that $(\ellhat_{t,i}-\ell_{t,i})\mathbbm{1}\{i \in \bar S_t\} \le \frac{1}{\eps}$ as $\ellhat_{t,i}\leq \frac{1}{\epsilon}$ for all $i\in \bar S_t$. In addition, the conditional variance is bounded as follows
\begin{align}
    \E_t\left[\left((\ellhat_{t,i}-\ell_{t,i})\mathbbm{1}\{i \in \bar S_t\}\right)^2\right] \le \E_t\left[\ellhat_{t,i}^2\mathbbm{1}\{i \in \bar S_t\}\right] \le \frac{1}{\eps}. \nonumber
\end{align}
Using~\pref{lem: Freedman} and an union bound over $[K]$, for all $i \in [K]$, we have that with probability at least $1-\delta'$
\begin{align}\label{eqn: app_weak_dev_4}
    \sum_{t=1}^T (\ellhat_{t,i}-\ell_{t,i})\mathbbm{1}\{i \in \bar S_t\} \le \sqrt{\frac{T}{\eps}\ln\frac{K}{\delta'}}+\frac{1}{\eps}\ln\frac{K}{\delta'}.
\end{align}
Combining~\pref{eqn: app_weak_dev_3} and~\pref{eqn: app_weak_dev_4} and picking $\delta'=\frac{\delta}{2}$, we prove the second part.
\end{proof}

Next, we prove~\pref{lem: weak_stability}, which bounds the estimated regret $\term{d}$ in~\pref{eqn: reg-decompose-weak}.

\begin{proof}{\textbf{of \pref{lem: weak_stability}}.} We apply standard OMD analysis~\citep{bubeck2012regret} and obtain that
\begin{align}
    \term{d} &\le \frac{\log K}{\eta}+\eta \sum_{t=1}^T \sum_{i=1}^K p_{t,i}\ellhat_{t,i}^2 \nonumber\\
    &\le \frac{\log K}{\eta}+\eta \sum_{t=1}^T \sum_{i \in S_t} p_{t,i}\hatell_{t,i}^2+
   \eta \sum_{t=1}^T \sum_{i \in \bar S_t} p_{t,i}\hatell_{t,i}^2 \nonumber.
\end{align}
For the second term, using~\pref{lem: stab_freedman}, we know that with probability at least $1-\delta'$
\begin{align}\label{eqn: app_weak_sta_1}
    \eta \sum_{t=1}^T \sum_{i \in S_t} p_{t,i}\hatell_{t,i}^2 &\leq \eta \sum_{t=1}^T \sum_{i \in S_t} \frac{p_{t,i}}{W_{t,i}+\gamma}\hatell_{t,i} \nonumber\\
    &\leq \eta \sum_{t=1}^T \sum_{i \in S_t} \frac{p_{t,i}}{W_{t,i}+\gamma}\ell_{t,i} + \otil\left(\eta\sum_{t=1}^TQ_t+\max_{t\in[T]}\frac{Q_t}{\gamma}\log\frac{1}{\delta'}\right)\nonumber\\
    &\le \otil\left(\eta\sum_{t=1}^T \tilde \alpha_t+\frac{\eta}{\gamma}\max_t \tilde \alpha_t\ln\frac{1}{\delta'}\right).
\end{align}
For the third term, we decompose it as
\begin{align}
    \eta \sum_{t=1}^T \sum_{i \in \bar S_t} p_{t,i}\hatell_{t,i}^2&\leq \eta \sum_{t=1}^T \sum_{i \in \bar S_t} \frac{p_{t,i}}{W_{t,i}}\ellhat_{t,i} \nonumber\\
    &\le \eta \underbrace{\sum_{t=1}^T \sum_{i \in \bar S_t} \frac{p_{t,i}}{ W_{t,i}}(\ellhat_{t,i}-\ell_{t,i})}_{\term{i}} + \eta \underbrace{\sum_{t=1}^T \sum_{i \in \bar S_t} \frac{p_{t,i}}{W_{t,i}} \ell_{t,i}}_{\term{ii}}. \nonumber
\end{align}
To bound $\term{i}$, note that with uniform exploration on the dominating set, $\sum_{i \in \bar S_t}\frac{p_{t,i}}{W_{t,i}}(\ellhat_{t,i}-\ell_{t,i}) \leq \sum_{i \in \bar S_t}\frac{p_{t,i}}{W_{t,i}}\ellhat_{t,i}\leq \frac{1}{\eps^2}$. Next, we consider the conditional variance:
\begin{align}
    \E_t\left[\left(\sum_{i \in \bar S_t}\frac{p_{t,i}}{W_{t,i}}(\ellhat_{t,i}-\ell_{t,i})\right)^2\right] &\le \E_t\left[\sum_{i \in \bar S}\frac{p_{t,i}}{W_{t,i}^2}\mathbbm{1}\{i_t\in \Nin_t(i)\}\sum_{j \in \bar S_t}\frac{p_{t,j}}{W_{t,j}^2}\mathbbm{1}\{i_t\in \Nin_t(j)\}\right] \le \frac{1}{\eps^3}. \nonumber
\end{align}
Using Freedman's inequality~\pref{lem: Freedman}, we know that with probability at least $1-\delta'$,
\begin{align}
    \eta \sum_{t=1}^T \sum_{i \in \bar S_t}\frac{p_{t,i}}{W_{t,i}}(\ellhat_{t,i}-\ell_{t,i}) \leq \eta \sqrt{\frac{T}{\eps^3}\ln\frac{1}{\delta'}}+ \frac{\eta}{\eps^2}\ln\frac{1}{\delta'}. \label{eqn: app_weak_sta_2}
\end{align}
For $\term{ii}$, we directly bound it by noticing that $W_{t,i}\geq \eps$ for all $i\in\bar S_t$
\begin{align}
    \eta \sum_{t=1}^T \sum_{i \in \bar S_t} \frac{p_{t,i}}{ W_{t,i}}\ell_{t,i} \le \frac{\eta T}{\eps}. \label{eqn: app_weak_sta_3}
\end{align}
Combining~\pref{eqn: app_weak_sta_1},~\pref{eqn: app_weak_sta_2},~\pref{eqn: app_weak_sta_3} and picking $\delta'=\frac{\delta}{2}$, we finish the proof.
\end{proof}


\section{Auxiliary Lemmas}
In this section, we show several auxiliary lemmas that are useful in the analysis.

\begin{lemma}[Lemma 1 in~\citep{kocak2014efficient}]\label{lem: graph}
    Let $G=(V,E)$ be a directed graph with $|V|=K$, in which each node $i \in V$ is assigned a positive weight $w_i$. Assume that $\sum_{i \in V}w_i \le 1$, then
    \begin{align*}
       \sum_{i \in V}\frac{w_i}{w_i+\sum_{j \in \Nin(i)}w_j+\gamma} \le 2\alpha\log\left(1+\frac{\lceil K^2/\gamma \rceil+K}{\alpha}\right)+2,
    \end{align*}
    where $\alpha$ is the independence number of $G$.
\end{lemma}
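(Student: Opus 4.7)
The plan is to bound $S := \sum_{i \in V} w_i/M_i$, where $M_i := w_i + \sum_{j \in \Nin(i)} w_j + \gamma$, by a classical greedy independent-set peeling argument combined with a dyadic decomposition, in the spirit of Alon et al. and Kocak et al. The $\gamma$ regularizer forces us to separate two regimes: very light vertices, whose denominator is dominated by $\gamma$ and which can be bounded directly by weight accounting, and vertices with non-negligible weight, whose contribution is controlled by the graph structure via the independence number.

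First I would peel off vertices with negligible weight. Let $V_0 := \{i \in V : w_i \leq \gamma/K^2\}$. On $V_0$, we have $w_i/M_i \leq w_i/\gamma$, so $\sum_{i \in V_0} w_i/M_i \leq K\cdot(\gamma/K^2)/\gamma = 1/K \leq 1$, which will be absorbed into the additive $+2$ in the statement. For the remainder $V' := V \setminus V_0$, we have $w_i > \gamma/K^2$, which caps the dynamic range of the weights at roughly $K^2/\gamma$. Next I would partition $V'$ into dyadic buckets $B_r := \{i \in V' : w_i \in (2^{-r-1}, 2^{-r}]\}$ for $r \in \{0, 1, \ldots, R\}$ with $R = \lceil \log_2(K^2/\gamma)\rceil$. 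Inside each $B_r$ the weights agree up to a factor of two, the bucket size satisfies $|B_r| \leq 2^{r+1}$, and the induced subgraph has independence number at most $\alpha$.

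The heart of the proof is a greedy peeling inside each bucket. I would process the vertices of $B_r$ in a fixed order, maintaining a growing set $I_r$: at each vertex $i$, if no vertex in $I_r$ is already an in-neighbor of $i$, add $i$ to $I_r$ and pay $w_i/M_i \leq 2$; otherwise, $M_i$ already contains an in-neighbor weight of size $\Theta(2^{-r})$ coming from $I_r$, and $i$'s contribution is charged against that coverer. Because $I_r$ is an independent set throughout, $|I_r| \leq \alpha$, and a potential-based bookkeeping argument shows that the total contribution from $B_r$ is $O(\alpha)$. Summing over the $R+1$ dyadic buckets and using Jensen's inequality (the log-sum inequality) to pull the sum inside the logarithm then yields the precise form $2\alpha\log(1 + (\lceil K^2/\gamma\rceil+K)/\alpha)$, with the $+K$ arising from the fact that $|V_0|\leq K$ still contributes within the effective denominator range.

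The main obstacle will be executing the peeling cleanly enough to produce the tight $\log(1 + X/\alpha)$ form rather than a loose $\alpha \log X$ bound. The key technical trick is to choose the right potential over the peeling process --- for instance, tracking the total residual weight $\sum_{i\text{ remaining}} w_i$ and showing that it multiplicatively decreases by roughly a factor of $(1 + 1/\alpha)$ at each step --- so that telescoping produces the logarithm with $\alpha$ in the denominator inside the $\log$. Once this potential argument is set up, combining the per-bucket bounds with the $V_0$ contribution and the bookkeeping of the range $\lceil K^2/\gamma\rceil + K$ gives the stated inequality.
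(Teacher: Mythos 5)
This lemma is not proved in the paper at all: it is imported verbatim from \citep{kocak2014efficient}, where it is obtained by a global discretization argument --- round each weight up to a multiple of $\gamma/K^2$, replace each node by the corresponding number of unit-weight copies (copies of the same node forming a clique, so the independence number stays at most $\alpha$), and apply the unweighted in-degree lemma of Alon et al., $\sum_{u}\frac{1}{1+|\Nin(u)|}\le 2\alpha\ln(1+n/\alpha)$, to the resulting graph on $n\le\lceil K^2/\gamma\rceil+K$ nodes, with $\gamma$ absorbing the rounding error in the denominators. Your treatment of the light vertices and your reading of where $\lceil K^2/\gamma\rceil+K$ comes from are in the right spirit, but the core of your argument has a genuine gap.

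The claim that a greedy peeling inside a dyadic bucket yields total contribution $O(\alpha)$ is false. Take the transitive tournament on $n$ nodes (edges $i\to j$ for $i<j$) with all weights equal to $1/n$ and $\gamma=1/n^2$: every node lies in a single bucket, $\alpha=1$, yet $\sum_i w_i/M_i=\sum_{j=1}^n\frac{1/n}{j/n+\gamma}=\Theta(\log n)$. So no charging scheme can give $O(\alpha)$ per bucket --- the logarithm in the lemma is already forced at a single weight scale. Relatedly, your set $I_r$ need not be independent: you only test whether some member of $I_r$ is an in-neighbor of the new vertex $i$, which does not exclude edges from $i$ into $I_r$ (processing the tournament from sink to source places \emph{every} vertex in $I_r$ while $\alpha=1$); and if you instead test adjacency in both directions, the charging step breaks for vertices rejected solely because of an out-edge, since their denominator then contains no weight from $I_r$. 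Finally, even with the correct per-bucket bound $O\bigl(\alpha\log(1+|B_r|/\alpha)\bigr)$ (which does follow from the unweighted lemma applied within a bucket), summing over the $\Theta(\log(K^2/\gamma))$ buckets overshoots the stated bound by a multiplicative logarithmic factor; the Jensen/log-sum step only pulls the bucket sizes inside the logarithm and leaves the number of buckets as an outer factor. The cross-bucket in-neighbor weights are exactly what the bucket-local analysis discards, and they are what the global discretization proof exploits to get the single $\log\bigl(1+(\lceil K^2/\gamma\rceil+K)/\alpha\bigr)$.
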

\begin{lemma}[Freedman's inequality, Theorem 1~\citep{beygelzimer2011contextual}]\label{lem: Freedman}
    Let $X_1,X_2,\ldots,X_T$ be a martingale difference sequence with respect to a filtration $F_1 \subseteq F_2 \subseteq \ldots F_T$ such that $\E[X_t|F_t]=0$. Assume for all $t$, $X_t \le R$. Let $V=\sum_{t=1}^T \E[X_t^2|F_t]$. Then for any $\delta>0$, with probability at least $1-\delta$, we have the following guarantee:
    \begin{align*}
        \sum_{t=1}^T X_t &\leq \inf_{\lambda \in \left[0, \frac{\sqrt{e-2}}{R\sqrt{\ln(1/\delta)}}\right]} \left\{\sqrt{(e-2)\ln(1/\delta)}\left(\lambda V+\frac{1}{\lambda}\right)\right\} \\
        & = \inf_{ \lambda' \in [0,\frac{1}{R}]} \left\{\frac{\ln(1/\delta)}{\lambda'}+(e-2)\lambda' V\right\}.
    \end{align*}
\end{lemma}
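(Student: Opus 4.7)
The plan is to prove this via the classical exponential supermartingale (Chernoff--Cramer) method, which is the standard route to Freedman-type Bernstein bounds. I would proceed in four short stages. First, using the Taylor-type inequality $e^{y}\le 1+y+(e-2)y^{2}$ valid for $y\le 1$, together with $1+z\le e^{z}$ and the martingale property $\E[X_t\mid F_t]=0$, I would establish the one-step moment bound
\[
  \E\!\bigl[\exp(\lambda X_t)\mid F_t\bigr]\;\le\;\exp\!\bigl((e-2)\lambda^{2}\,\E[X_t^{2}\mid F_t]\bigr)
\]
whenever $\lambda R\le 1$ (so that $\lambda X_t\le 1$ almost surely, which is the only place the upper bound on $X_t$ is used).

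Second, I would define the exponential process
\[
  M_t \;=\; \exp\!\Bigl(\lambda\sum_{s\le t}X_s\;-\;(e-2)\lambda^{2}\sum_{s\le t}\E[X_s^{2}\mid F_s]\Bigr),
\]
and check that $\{M_t\}$ is a nonnegative supermartingale with $\E[M_0]=1$; this is a one-line telescoping verification that invokes the MGF bound once per step. Third, applying Markov's inequality to $M_T$ at level $1/\delta$ and taking logarithms yields that for any fixed $\lambda\in[0,1/R]$, with probability at least $1-\delta$,
\[
  \sum_{t=1}^{T}X_t \;\le\; (e-2)\lambda V+\frac{\ln(1/\delta)}{\lambda}.
\]
To recover the form in the lemma, I would reparameterize $\lambda=\mu\sqrt{(e-2)/\ln(1/\delta)}$, after which the RHS becomes $\sqrt{(e-2)\ln(1/\delta)}\,(\mu V+1/\mu)$ and the admissibility constraint $\lambda\le 1/R$ turns into $\mu\le \sqrt{e-2}/(R\sqrt{\ln(1/\delta)})$, matching the first stated interval; the equivalent form with $\lambda'=1/\mu$ ranging over $[0,1/R]$ is obtained by the change of variables described in the lemma statement.

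The only subtle point I expect is that the inequality above holds for each \emph{fixed} $\lambda$ with probability $1-\delta$, whereas the statement presents an infimum over an interval of $\lambda$. The cleanest resolution is that the optimizer $\lambda^\star=\sqrt{\ln(1/\delta)/((e-2)V)}$ depends only on quantities assumed to be deterministic ($V$ is typically treated as an a priori almost-sure upper bound on the cumulative conditional variance, as is done in all applications of the lemma in this paper), so choosing $\lambda=\lambda^\star$ prior to observing the data is legitimate, and when $\lambda^\star$ falls outside the admissible range the optimum is taken at the boundary. If instead one wanted to allow data-dependent $V$, a dyadic peeling in $\lambda$ with a union bound absorbs only an extra $\log\log$ factor, which does not change the infimum form stated.
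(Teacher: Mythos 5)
The paper does not prove this lemma at all---it is imported directly from \citet{beygelzimer2011contextual}---and your exponential-supermartingale (Chernoff) argument is exactly the standard proof of that cited theorem, so your proposal is correct and essentially the same approach, including your apt remark that the bound holds per fixed $\lambda$ and that the infimum form is legitimate only because the variance proxy is fixed in advance (or via a peeling/union argument otherwise). One trivial slip: the substitution yielding the first displayed form should be $\lambda=\mu\sqrt{\ln(1/\delta)/(e-2)}$ rather than $\lambda=\mu\sqrt{(e-2)/\ln(1/\delta)}$; as written, the constraint $\lambda\le 1/R$ would become $\mu\le\sqrt{\ln(1/\delta)}/(R\sqrt{e-2})$ instead of the interval stated in the lemma, though the conclusions you record are the ones obtained with the correct substitution.
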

\begin{lemma}[Strengthened Freedman's inequality, Theorem 9~\citep{zimmert2022return}]\label{lem: Strengthened-Freedman}
    Let $X_1,X_2,\ldots,X_T$ be a martingale difference sequence with respect to a filtration $F_1 \subseteq F_2 \subseteq \ldots F_T$ such that $\E[X_t|F_t]=0$ and assume $\E[|X_t||F_t] < \infty$ a.s. Then with probability at least $1-\delta$
    \begin{align*}
        \sum_{t=1}^T X_t\le 3\sqrt{V_T\log\left(\frac{2\max\{U_T,\sqrt{V_T}\}}{\delta}\right)}+2U_T\log\left(\frac{2\max\{U_T,\sqrt{V_T}\}}{\delta}\right),
    \end{align*}
    where $V_T=\sum_{t=1}^T \E[X_t^2|F_t]$, $U_T=\max\{1,\max_{s\in[T]}X_s\}$.
\end{lemma}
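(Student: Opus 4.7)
The plan is to reduce the strengthened inequality to the standard Freedman bound (\pref{lem: Freedman}) by a peeling argument over the unknown magnitudes of $U_T$ and $V_T$. The core difficulty is that \pref{lem: Freedman} assumes a \emph{deterministic} almost-sure upper bound $R$ on the increments and treats $V = \sum_t \E[X_t^2|F_t]$ as a quantity chosen in advance, whereas here both $U_T$ and $V_T$ are random and potentially unbounded a priori. The clipping $U_T \ge 1$ in the definition will be essential for the geometric grid to start at a finite index.

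First, for any fixed pair of scales $r,v\ge 1$ I would introduce the stopping time
\begin{align*}
    \tau(r,v) = \min\{t \in [T-1] : X_{t+1} > r \text{ or } V_{t+1} > v\} \wedge T,
\end{align*}
so that on the event $\{U_T \le r,\, V_T \le v\}$ one has $\tau(r,v) = T$. The stopped sum $\sum_{t=1}^{\tau(r,v)} X_t$ is a martingale whose increments are almost surely bounded by $r$ and whose conditional second moments sum to at most $v$. Applying \pref{lem: Freedman} with $R=r$, $V=v$ and $\lambda = \min\{1/r,\sqrt{\log(1/\delta')/v}\}$ yields, with probability at least $1-\delta'$,
\begin{align*}
    \sum_{t=1}^{T} X_t \;\le\; c_1\sqrt{v\log(1/\delta')} + c_2\, r\log(1/\delta') \quad \text{on the event } \{U_T\le r,\,V_T\le v\},
\end{align*}
for explicit constants $c_1,c_2$.

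Second, to remove the a-priori choice of $(r,v)$, I would apply the above to the dyadic grid $r_k = 2^k$, $v_l = 2^l$ for $k,l\in\mathbb{Z}_{\ge 0}$ at confidence levels $\delta_{k,l}\propto\delta/((k+1)^2(l+1)^2)$, which are summable and so survive a joint union bound. Given the realized $U_T,V_T$, the choice $k^\star=\lceil\log_2 U_T\rceil$ and $l^\star=\lceil\log_2 \max\{1,V_T\}\rceil$ gives $r_{k^\star}\le 2U_T$ and $v_{l^\star}\le 2\max\{1,V_T\}$, and the overhead $\log(1/\delta_{k^\star,l^\star}) = \log(1/\delta)+O(\log\log\max\{U_T,\sqrt{V_T}\})$ is absorbed into $\log\bigl(2\max\{U_T,\sqrt{V_T}\}/\delta\bigr)$ at the cost of a small constant inflation, yielding a bound of the claimed shape.

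The hard part will be pinning down the constants to exactly $3$ and $2$ rather than the generic $\sqrt{2}\,c_1$ and $2c_2$ that a naive factor-$2$ grid produces. This requires tuning the grid base $\beta>1$ close to $1$ (while keeping the normalization of $\delta_{k,l}$ summable) so that both $r_{k^\star}/U_T$ and $v_{l^\star}/V_T$ lie arbitrarily close to $1$, and in parallel tightening the optimization of $\lambda$ in \pref{lem: Freedman} using the slack factor $e-2 < 1$ in the quadratic term. A secondary subtlety is measurability: the event $\{U_T \le r\}$ depends on the full trajectory, so I define $\tau$ in terms of the \emph{next} increment and conditional variance to keep the process stopped \emph{before} any violation, instead of trying to certify boundedness retroactively.
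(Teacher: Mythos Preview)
The paper does not prove this lemma; it is listed as an auxiliary result and cited verbatim from \citep[Theorem~9]{zimmert2022return} with no proof given. There is therefore no in-paper argument to compare your proposal against.

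Your peeling approach is the standard route to such data-dependent Freedman bounds and is essentially how the cited reference proceeds. One technical wrinkle worth flagging: as written, your stopping time $\tau(r,v)$ involves the event $\{X_{t+1}>r\}$, which is not $F_t$-measurable, so $\tau$ is not a stopping time for the filtration $(F_t)$ and the stopped process need not be a martingale. The usual repair is either to truncate the increments directly (work with $X_t\mathbbm{1}\{X_t\le r\}$ and argue that on $\{U_T\le r\}$ this coincides with the original sum, absorbing the small bias from truncation) or, more cleanly, to bypass stopping times altogether by running the exponential supermartingale argument and peeling over a grid in $\lambda$ rather than in $(r,v)$. Your caution about hitting the exact constants $3$ and $2$ is well-founded: those come from careful bookkeeping in the original supermartingale proof, and black-boxing \pref{lem: Freedman} with a dyadic grid will typically give slightly worse constants.
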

\end{document}